\def\eqref#1{equation~\ref{#1}}
\def\1{\bm{1}}
\DeclareMathAlphabet{\mathsfit}{\encodingdefault}{\sfdefault}{m}{sl}
\SetMathAlphabet{\mathsfit}{bold}{\encodingdefault}{\sfdefault}{bx}{n}
\DeclareMathOperator*{\argmax}{arg\,max}
\DeclareMathOperator*{\argmin}{arg\,min}
\theoremstyle{plain}
\newtheorem{theorem}{Theorem}[section]
\newtheorem{lemma}[theorem]{Lemma}
\newtheorem{proposition}[theorem]{Proposition}
\theoremstyle{definition}
\newtheorem{definition}{Definition}[section]
\newtheorem{example}[definition]{Example}
\newtheorem{assumption}[definition]{Assumption}
\title{One-Step Distributional Reinforcement Learning}
\author{\name Mastane Achab \email mastane.achab@tii.ae \\
      \name Reda~Alami,
      \name Yasser~Abdelaziz~Dahou~Djilali,
      \name Kirill~Fedyanin \\
      \addr Technology Innovation Institute, 9639 Masdar City, Abu Dhabi, United Arab Emirates
      \ANDD
      \name Eric~Moulines \\
      \addr Ecole polytechnique, Palaiseau, France
      }
\begin{document}

\maketitle

\begin{abstract}
Reinforcement learning (RL) allows an agent interacting sequentially with an environment to maximize its long-term expected return.
In the distributional RL (DistrRL) paradigm, the agent goes beyond the limit of the expected value, to capture the underlying probability distribution of the return \emph{across all time steps}.
The set of DistrRL algorithms has led to improved empirical performance.
Nevertheless, the theory of DistrRL is still not fully understood, especially in the control case.
In this paper, we present the simpler \emph{one-step distributional reinforcement learning} (OS-DistrRL) framework 
encompassing only the randomness induced by the \emph{one-step dynamics} of the environment.
Contrary to DistrRL, we show that our approach comes with a unified theory for both policy evaluation and control.
Indeed, we propose two OS-DistrRL algorithms for which we provide an almost sure convergence analysis.
The proposed approach compares favorably with categorical DistrRL on various environments.
\end{abstract}

\section{Introduction}\label{sec:intro}
In reinforcement learning (RL), a decision-maker or \emph{agent} sequentially interacts with an unknown and uncertain environment in order to optimize some performance criterion~\citep{sutton2018reinforcement}.
At each time step, the agent observes the current state of the environment, then takes an action that influences both its immediate reward and the next state.
In other words, RL refers to learning through trial-and-error a strategy (or \emph{policy}) mapping states to actions that maximizes the \emph{long-term cumulative reward} (or \emph{return}): this is the so-called control task. More accurately, this return is a random variable -- due to the random transitions across states -- and classical RL only focuses on its expected value.
On the other hand, the policy evaluation task aims at assessing the quality of any given policy (not necessarily optimal as in control) by computing its expected return in each initial state, also called \emph{value function}.
For both evaluation and control, when the \emph{model} (i.e. reward function and transition probabilities between states) is known, these value functions can be seen as fixed points of some operators and computed by dynamic programming (DP) under the Markov decision process (MDP) formalism~\citep{puterman2014markov}.
Nevertheless, the model in RL is typically unknown and the agent can only approximate the DP approach based on empirical trajectories.
The TD(0) algorithm for policy evaluation and Q-learning for control, respectively introduced in~\citep{sutton1988learning} and~\citep{watkins1992q}, are flagship examples of the RL paradigm.
Formal convergence guarantees were provided for both of these methods, see~\citep{dayan1992convergence,tsitsiklis1994asynchronous,jaakkola1993convergence}.
In many RL applications, the number of states is very large and thus prevents the use of the aforementioned tabular RL algorithms.
In such a situation, one should rather use function approximation to approximate the value functions, as achieved by the \textsc{DQN} algorithm~\citep{mnih2013playing,mnih2015human} combining ideas from Q-learning and deep learning.
More recently, the distributional reinforcement learning (DistrRL) framework was proposed by~\citet{bellemare2017distributional}; see also~\citep{morimura2010nonparametric,morimura2010parametric}.
In DistrRL, the agent is optimized to model the whole probability distribution of the return, not just its expectation.
In this new paradigm, several distributional procedures where proposed as extensions of the classic (non-distributional) RL methods, leading to improved empirical performance.
In most cases, a DistrRL algorithm is composed of two main ingredients: i) a parametric family of distributions serving as proxies for the distributional returns, and ii) a metric measuring approximation errors between original distributions and parametric proxies.
\paragraph{Related work.}
We recall a few existing DistrRL approaches, all of which considering mixtures of Dirac measures as their parametric family of proxy distributions.
Indeed, the Categorical DistrRL (CDRL) parametrization used in the \textsc{C51} algorithm proposed in~\citep{bellemare2017distributional} was shown in~\citep{rowland2018analysis} to correspond to orthogonal projections derived from the Cram\'er distance\footnote{The Cram\'er distance $\ell_2$ between two probability distributions $\nu_1,\nu_2$ with CDFs $F_1,F_2$
is equal to $\ell_2(\nu_1,\nu_2)=\sqrt{ \int_\mathbb{R} ( F_1(x) - F_2(x) )^2 dx }$ \ . }.
The CDRL approach learns the categorical probabilities $p_1(x, a), \dots, p_K(x, a)$ of a distribution $\sum_{k = 1}^K p_k(x, a) \delta_{z_k}$ with fixed predefined support values $z_1, \dots, z_K$.
On the other hand, the quantile regression approach was proposed in~\citep{dabney2018distributional}, where the support $\{Q_1(x,a),\dots,Q_N(x,a)\}$ of the proxy distribution $\frac{1}{N} \sum_{i = 1}^N \delta_{Q_i(x, a)}$ is learned for fixed uniform probabilities $\frac{1}{N}$ over the $N$ atoms.
In~\citep{dabney2018distributional}, these atoms result from Wasserstein-1 projections, and correspond to quantiles of the unprojected distribution.
\cite{achab2021robustness} and \cite{achab2022distributional} later investigated the Wasserstein-2 setup leading to
the study of conditional value-at-risk measures.
For policy evaluation, the convergence analysis of CDRL and of the quantile approach were respectively derived in~\citep{rowland2018analysis} and~\citep{rowland2023analysis}.
Nevertheless, the theory of DistrRL is more challenging in the control case because the corresponding operator is not
a contraction\footnote{A function mapping a metric space to itself is called a $\gamma$-contraction if it is Lipschitz continuous with Lipschitz constant $\gamma<1$.} and does not necessarily admits a fixed point.
For that reason, \cite{rowland2018analysis} proved the convergence of CDRL for control only under the restrictive assumption that the optimal policy is unique.
Hence, it seems natural to ask the
following question: ``Is there another formulation of DistrRL in which both the evaluation and the control tasks lead to contractive operators?''. The answer
provided by this paper is ``Yes, via a one-step approach!''.
Contrary to classic DistrRL dealing with the randomness across all time steps, the one-step variant (originally proposed by \cite{achab2020ranking}) only cares about the one-step dynamics of the environment.

%
%
\paragraph{Contributions.}

Our main contributions are as follows:
\begin{itemize}
  \item We introduce a one-step variant of the DistrRL framework: we call that approach \emph{one-step distributional reinforcement learning} (OS-DistrRL).

  \item We show that our new method solves the well-known instability issue of DistrRL in the control case.

  \item We provide a unified almost-sure convergence analysis for both evaluation and control. 

  \item We experimentally show the competitive performance of our new (deep learning-enhanced) algorithms in various environments.
\end{itemize}
The paper is organized as follows.
In Section~\ref{sec:background}, we recall a few standard RL tools and notations as well as their DistrRL generalization.
Then, our one-step approach is defined in Section~\ref{sec:osop}.
Section~\ref{sec:OSalgo} introduces new OS-DistrRL algorithms along with theoretical convergence guarantees. Finally, numerical experiments are provided for illustration purpose in Section~\ref{sec:experiments}.
The main proofs are deferred to the Supplementary Material.
\paragraph{Notations.}
The indicator function of any event $E$ is denoted by $\mathbb{I}\{E\}$.
We let $\mathscr{P}_b(\mathbb{R})$ be the set of probability measures on $\mathbb{R}$ having bounded support, and $\mathscr{P}(\mathcal{E})$ the set of probability mass functions on any finite set $\mathcal{E}$, whose cardinality is denoted by $|\mathcal{E}|$.
The support of any discrete distribution $q\in \mathscr{P}(\mathcal{E})$ is $ \text{support}(q) = \{ y \in \mathcal{E}\colon q(y) > 0 \}$; the supremum norm of any function $h\colon \mathcal{E}\rightarrow \mathbb{R}$ is $\|h\|_\infty = \max_{y\in\mathcal{E}} |h(y)|$.
The cumulative distribution function (CDF) of a probability measure $\nu\in \mathscr{P}_b(\mathbb{R})$ is the mapping $F(z)=\mathbb{P}_{Z\sim \nu}(Z\le z)$ ($\forall z \in \mathbb{R}$), and we denote its generalized inverse distribution function (a.k.a. quantile function) by
$F^{-1}\colon \tau \in (0, 1]\mapsto \inf\{z\in\mathbb{R}, F(z)\ge \tau\}$.
Given $\nu_1,\nu_2\in \mathscr{P}_b(\mathbb{R})$ with respective CDFs $F_1,F_2$, we denote $\nu_2\le \nu_1 $ and say that $\nu_1$ \emph{stochastically dominates} $\nu_2$ if $F_1(z) \le F_2(z)$ for all $z\in\mathbb{R}$.
For any probability measure $\nu\in\mathscr{P}_b(\mathbb{R})$
and measurable function $f\colon \mathbb{R}\rightarrow \mathbb{R}$, the \emph{pushforward measure} $f_\# \nu$ is defined for any Borel set $A\subseteq \mathbb{R}$ by $f_\# \nu(A) = \nu(f^{-1}(A)) = \nu(\{ z \in \mathbb{R}\colon f(z) \in A \})$.
In this article, we only need the affine case $f_{r_0,\gamma}(z)= r_0 + \gamma z$ (with $r_0\in\mathbb{R}, \gamma\in [0,1)$) for which
$f_{r_0,\gamma}{}_\#\nu \in\mathscr{P}_b(\mathbb{R})$ and
$f_{r_0,\gamma}{}_\#\nu(A) = \nu(\{ \frac{z-r_0}{\gamma}\colon z \in A \})$ if $\gamma\neq 0$,
or $f_{r_0,\gamma}{}_\#\nu = \delta_{r_0}$ is the Dirac measure at $r_0$ if $\gamma=0$.

\section{Background on distributional reinforcement learning}
\label{sec:background}
In this section, we recall some standard notations, tools and algorithms used in RL and DistrRL.

\subsection{Markov decision process}

Throughout the paper, we consider a Markov decision process (MDP) characterized by the tuple $(\mathcal{X},\mathcal{A},P,r,\gamma)$ with
finite state space $\mathcal{X}$, finite action space $\mathcal{A}$, transition kernel $P\colon \mathcal{X}\times \mathcal{A} \rightarrow \mathscr{P}(\mathcal{X})$, reward function $r\colon \mathcal{X} \times \mathcal{A} \times \mathcal{X} \rightarrow \mathbb{R}$, and discount factor $0\le \gamma < 1$.
%
If the agent takes some action $a\in\mathcal{A}$ while the environment is in state $x\in\mathcal{X}$, then the next state $X_1$ is sampled from the distribution $P(\cdot|x,a)$ and the immediate reward is equal to $r(x,a,X_1)$.
In the discounted MDP setting, the agent seeks a policy $\pi\colon \mathcal{X}\rightarrow \mathscr{P}(\mathcal{A})$ maximizing its expected long-term return for each pair $(x,a)$ of initial state and action:
\begin{equation*}
    Q^\pi(x,a) = \mathbb{E}\left[ \sum_{t=0}^\infty \gamma^t r(X_t,A_t,X_{t+1}) \  \Big| \ X_0=x, A_0=a \right] \, ,
\end{equation*}
where $X_{t+1}\sim P(\cdot|X_t,A_t)$ and $A_{t+1}\sim\pi(\cdot|X_{t+1})$.
$Q^\pi(x,a)$ and $V^\pi(x)=\sum_{a\in\mathcal{A}} \pi(a|x)Q^\pi(x,a)$ are respectively called the state-action value function and the value function of the policy $\pi$. Further, each of these functions can be seen as the unique fixed point of a so-called Bellman operator~\citep{bellman57dynamic}, that we denote by $T^\pi$ for Q-functions.
For any $Q\colon \mathcal{X} \times \mathcal{A} \rightarrow \mathbb{R}$, the image of $Q$ by $T^\pi$ is another Q-function given by:
\begin{equation*}
\label{eq:BellmanOp}
    (T^\pi Q)(x,a) = \sum_{x'} P(x'|x,a) \bigl[ r(x,a,x') + \gamma \sum_{a'} \pi(a'|x') Q(x',a') \bigr]  \, .
\end{equation*}
This Bellman operator $T^\pi$ has several nice properties: in particular, it is a $\gamma$-contraction in $\|\cdot\|_\infty$ and thus admits a unique fixed point (by Banach's fixed point theorem), namely $Q^\pi=T^\pi Q^\pi$.
It is also well-known from~\citep{bellman57dynamic} that there always exists at least one policy $\pi^*$ that is optimal uniformly for all initial conditions $(x,a)$:
\begin{multline*}
    Q^*(x,a) := Q^{\pi^*}(x,a)=\sup_{\pi} Q^\pi(x,a) \quad
    \text{ and } \quad V^*(x) := \max_{a} Q^*(x,a) = V^{\pi^*}(x) = \sup_\pi V^\pi(x) \, .
\end{multline*}
Similarly, this optimal Q-function $Q^*$ is the unique fixed point of some operator $T$ called the Bellman optimality operator and defined by:
\begin{equation*}
\label{eq:BellmanOptimOp}
    (T Q)(x,a) = \sum_{x'} P(x'|x,a) \bigl[ r(x,a,x') + \gamma \max_{a'} Q(x',a') \bigr] \, ,
\end{equation*}
which is also a $\gamma$-contraction in $\|\cdot\|_\infty$.
Noteworthy, knowing $Q^*$ is sufficient to behave optimally: indeed, a policy $\pi^*$ is optimal if and only if in every state $x$, $\text{support}\bigl(\pi^*(\cdot|x)\bigr) \subseteq \argmax_a Q^*(x, a)$.

\begin{figure}
     \centering
     \includegraphics[width=0.8\linewidth]{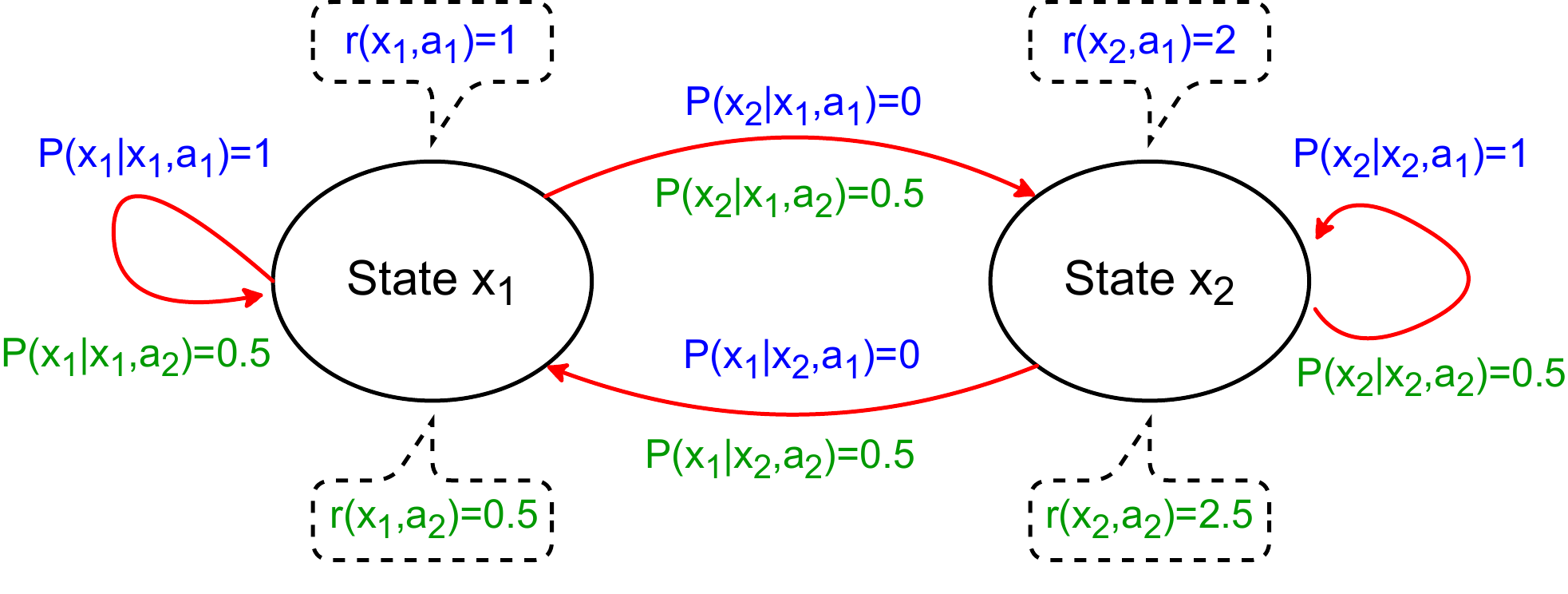}
     \caption{Markov decision process with two states $\mathcal{X}=\{x_1,x_2\}$, two actions $\mathcal{A}=\{a_1,a_2\}$ and reward function independent of the next state: $r(x,a,\cdot)\equiv r(x,a)$.}
     \label{fig:mdp}
\end{figure}

\subsection{The distributional Bellman operator}
In distributional RL, we replace scalar-valued functions $Q$ by functions $\mu$ taking values that are entire probability distributions: $\mu^{(x,a)}\in\mathscr{P}_b(\mathbb{R})$ for each pair $(x,a)$. In other words, $\mu$ is a collection of distributions indexed by states and actions.
We recall below the definition of the distributional Bellman operator, which generalizes the Bellman operator to distributions.

\begin{definition}[\textsc{Distributional Bellman operator, \cite{bellemare2017distributional}}]
  \label{def:DBO}
  Let $\pi$ be a policy. The distributional Bellman operator $\mathcal{T}^\pi\colon \mathscr{P}_b(\mathbb{R})^{\mathcal{X}\times\mathcal{A}} \rightarrow \mathscr{P}_b(\mathbb{R})^{\mathcal{X}\times\mathcal{A}}$ is defined for any distribution function $\mu=~(\mu^{(x,a)})_{x,a}$ by
  \begin{equation*}
    (\mathcal{T}^\pi \mu)^{(x,a)} = \sum_{x',a'} P(x'|x,a)\pi(a'|x') f_{r(x,a,x'),\gamma}{}_\#\mu^{(x',a')} \ .
  \end{equation*}
\end{definition}
We know from~\citet{bellemare2017distributional} that $\mathcal{T}^\pi$ is a $\gamma$-contraction
in the maximal $p$-Wasserstein metric\footnote{For $p\ge 1$, we recall that the $p$-Wasserstein distance between two
probability distributions $\nu_1,\nu_2$ on $\mathbb{R}$ with CDFs $F_1,F_2$ is defined as
$ W_p(\nu_1, \nu_2) = \left( \int_{\tau=0}^1 \left| F_1^{-1}(\tau) - F_2^{-1}(\tau) \right|^p d\tau\right)^\frac{1}{p}
$. If $p=\infty$, $ W_\infty(\nu_1, \nu_2) = \sup_{\tau\in(0, 1)}
|F_1^{-1}(\tau) - F_2^{-1}(\tau)| $.}
\begin{equation*}
\overline{W}_p(\mu_1, \mu_2) = \max_{(x,a)\in\mathcal{X}\times\mathcal{A}} W_p(\mu_1^{(x, a)}, \mu_2^{(x, a)})
\end{equation*}
at any order $p\in [1, +\infty]$.
Consequently, $\mathcal{T}^\pi$ has a unique fixed point $\mu_\pi=(\mu_\pi^{(x,a)})_{x,a}$ equal to the collection of the probability distributions of the returns:
\begin{equation}
  \label{eq:law_return}
\mu_\pi^{(x,a)} = \text{Distr}\left( \sum_{t=0}^\infty \gamma^t r(X_t,A_t,X_{t+1}) \, \bigg|\, X_0=x,A_0=a \right) \,.
\end{equation}
Applying $\mathcal{T}^\pi$ may be prohibitive in terms of space complexity: if $\mu$ is discrete, then
$\mathcal{T}^\pi \mu$ is still discrete but with up to $|\mathcal{X}| \cdot |\mathcal{A}|$ times more atoms, as shown in the following example.
\begin{example}
  \label{ex:full}
  Let $\mu = (\mu^{(x,a)})_{x,a}$ be the collection of the atomic distributions
  \begin{equation*}
    \mu^{(x,a)} = \sum_{k=1}^K p_k(x,a) \delta_{Q_k(x,a)} \, ,
  \end{equation*}
  where $K\ge 1$, $p_k(x,a)\ge 0$ and $p_1(x,a)+\dots+p_K(x,a) = 1$.
  Then, $\mathcal{T}^\pi \mu$ is also a collection of atomic distributions with (at most) $|\mathcal{X}| \cdot |\mathcal{A}|$ times more atoms:
  \begin{equation*}
    (\mathcal{T}^\pi \mu)^{(x,a)} = \sum_{x',a'} P(x'|x,a)\pi(a'|x') \sum_{k=1}^K p_k(x',a') \delta_{r(x,a,x')+\gamma Q_k(x',a')} \, .
  \end{equation*}
\end{example}
%

\paragraph{Projected operators.}
Motivated by this space complexity issue, projected DistrRL operators were proposed to ensure a predefined and fixed space complexity budget.
A projected DistrRL operator is the composition of a DistrRL operator
with a projection over some parametric family of distributions.
The Cram\'er distance projection $\Pi_{\mathcal{C}}$ has been considered in~\citep{bellemare2017distributional,rowland2018analysis,bellemare2019distributional}: we recall its definition below.
\begin{definition}[\textsc{Cram\'er projection}]
\label{def:Cramer}
    Let $K\ge 2$ and $z_1 < \dots < z_K$ be real numbers defining the support of the categorical distributions. The Cram\'er projection is then defined by: for all $z'\in\mathbb{R}$,
    \begin{equation*}
    \Pi_\mathcal{C}(\delta_{z'})=
    \begin{cases}
    \delta_{z_1} \quad \text{ if }\  z'\le z_1 \\
    \frac{z_{j+1}-z'}{z_{j+1}-z_j}\delta_{z_j} + \frac{z'-z_j}{z_{j+1}-z_j}\delta_{z_{j+1}} \quad \text{ if }\  z_j < z' \le z_{j+1} \\
    \delta_{z_K} \quad \text{ if }\  z' > z_K
    \end{cases}
    \end{equation*}
    and extended linearly to mixtures of Dirac measures.
\end{definition}
We will also apply $\Pi_{\mathcal{C}}$ entrywise to collections of distributions: if $\mu=(\mu^{(x,a)})_{x,a}$,
then $\Pi_{\mathcal{C}}\mu=(\Pi_{\mathcal{C}} \mu^{(x,a)})_{x,a}$.
The Cram\'er projection satisfies an important \emph{mean-preserving property}: for any discrete distribution $q$ with support included
in the interval $[z_1, z_K]$, then:
\begin{equation}
  \label{eq:mean_preserving}
  \mathbb{E}_{Z\sim \Pi_{\mathcal{C}} (q)}[Z] = \mathbb{E}_{Z\sim q}[Z] \ .
\end{equation}
In this line of work, the proxy distributions $\sum_{k=1}^K p_k(x,a)\delta_{z_k}$ are parametrized by the categorical probabilities $p_1(x,a),\dots,p_K(x,a)$.
In~\citep{dabney2018distributional}, the Wasserstein-1 projection $\Pi_{W_1}$ over atomic distributions $\frac{1}{N} \sum_{i=1}^N \delta_{Q_i(x,a)}$ is used. For this specific projection, the atoms that best approximate some CDF $F_{x,a}$ are obtained through quantile regression with $Q_i(x,a)=F_{x,a}^{-1}(\frac{2i-1}{2N})$.

\subsection{Instability of distributional control}
\label{subsec:instability}

\citet{bellemare2017distributional} have also proposed DistrRL optimality operators $\mathcal{T}$, defined such that
$\mathcal{T}\mu = \mathcal{T}^{\pi_G}\mu$ where $\pi_G$ is a greedy policy with respect to the expectation of $\mu$.
The authors showed in their Propositions 1-2 that, unfortunately, $\mathcal{T}$ is not a contraction and does not necessarily admits a fixed point.
For that reason, the convergence analysis of the control case in DistRL is more challenging than for the evaluation task.
\cite{rowland2018analysis} circumvent this issue by reducing the control task to evaluation under the assumption that the optimal policy is unique.
We start our investigation by verifying that this uniqueness hypothesis is critical to ensure convergence in DistrRL control.
For that purpose, we choose a toy example -- fully described in Section~\ref{sec:experiments} -- with (infinitely) many optimal policies and we run a CDRL procedure over it.
As expected, we observe in Figure~\ref{fig:os_conv} the unstable behavior of the DistrRL paradigm for the control task.
In the next section, we introduce our new ``one-step'' distributional approach converging even in this situation.

\begin{figure}
    \centering
    \subcaptionbox{CDRL at $(x_1,a_1)$. \label{fig:full_a1}}{%
        \includegraphics[height=1.31in]{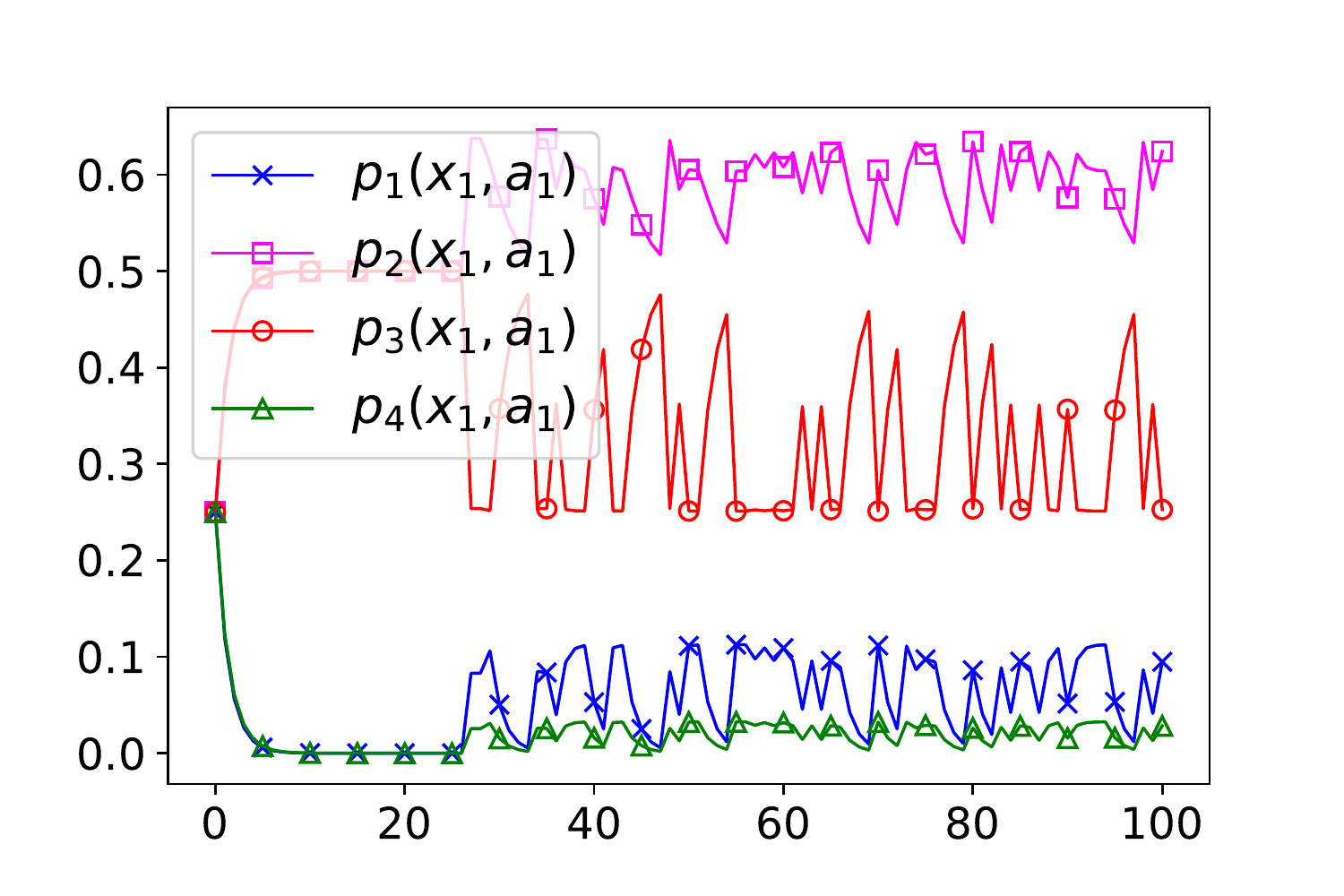}}
    \subcaptionbox{One-step at $(x_1,a_1)$. \label{fig:os_a1}}{%
        \includegraphics[height=1.31in]{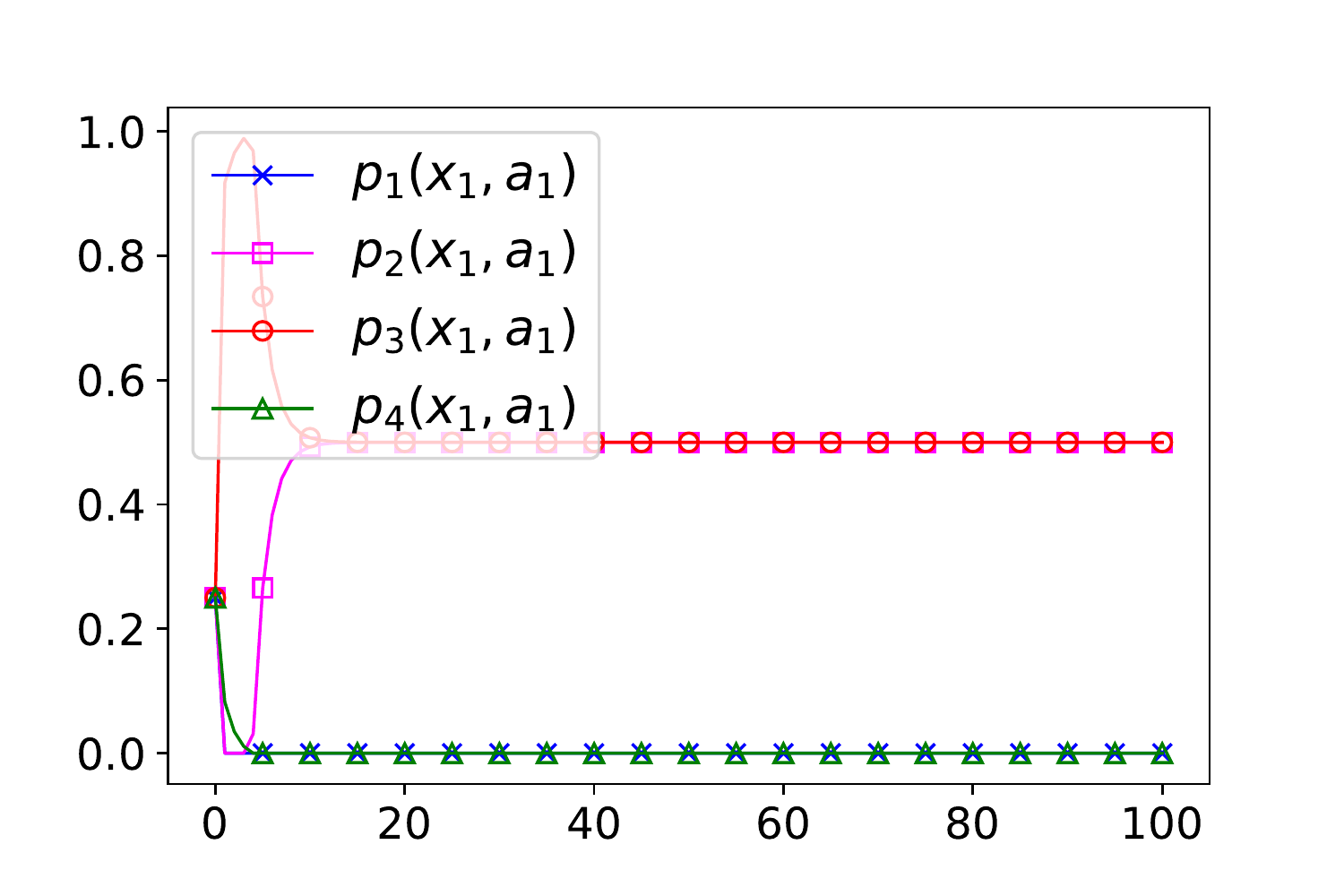}}
    \subcaptionbox{Both Q-functions at $(x_1,a_1)$. \label{fig:Q_a1}}{%
        \includegraphics[height=1.32in]{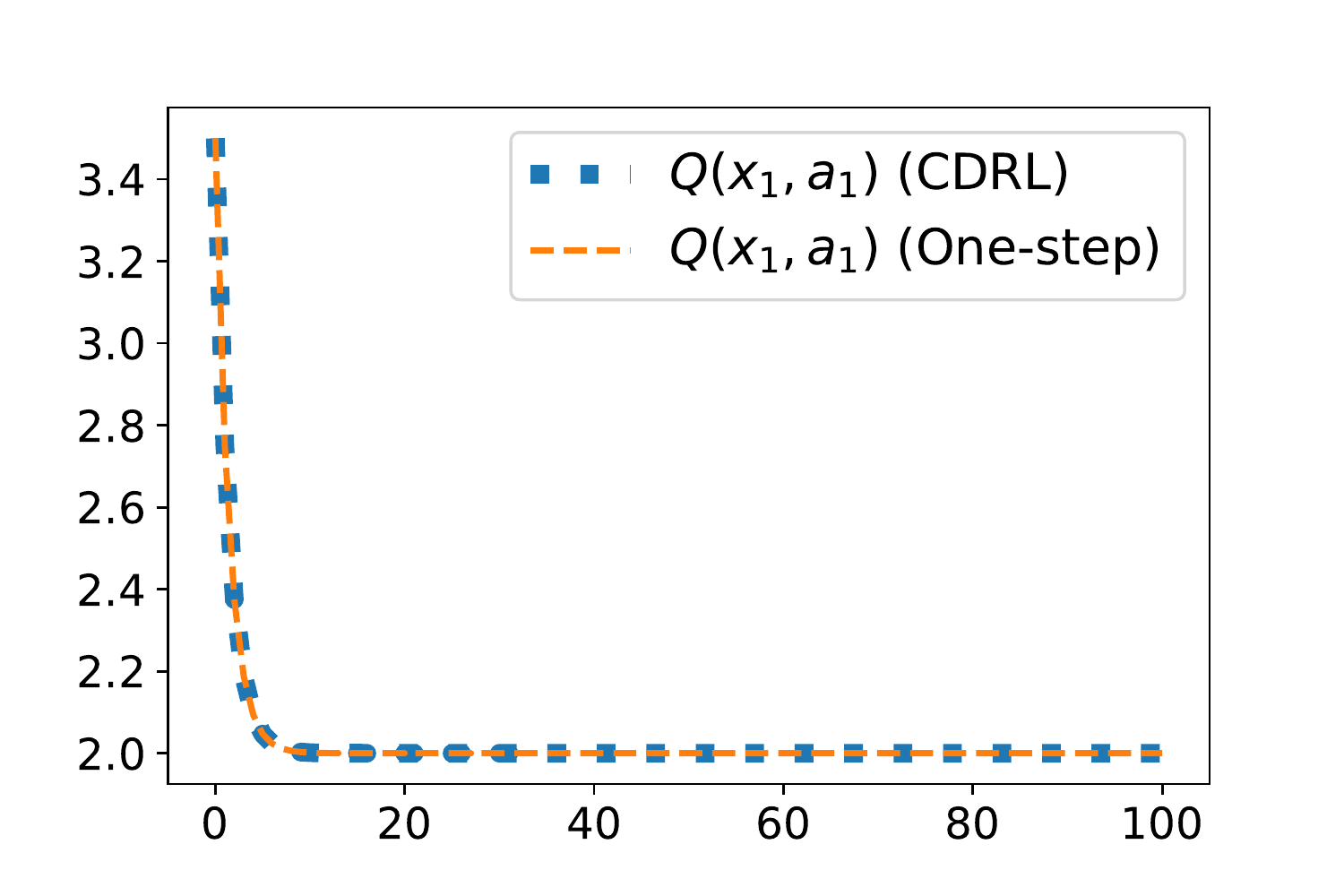}} \\
    \subcaptionbox{CDRL at $(x_1,a_2)$. \label{fig:full_a2}}{%
        \includegraphics[height=1.31in]{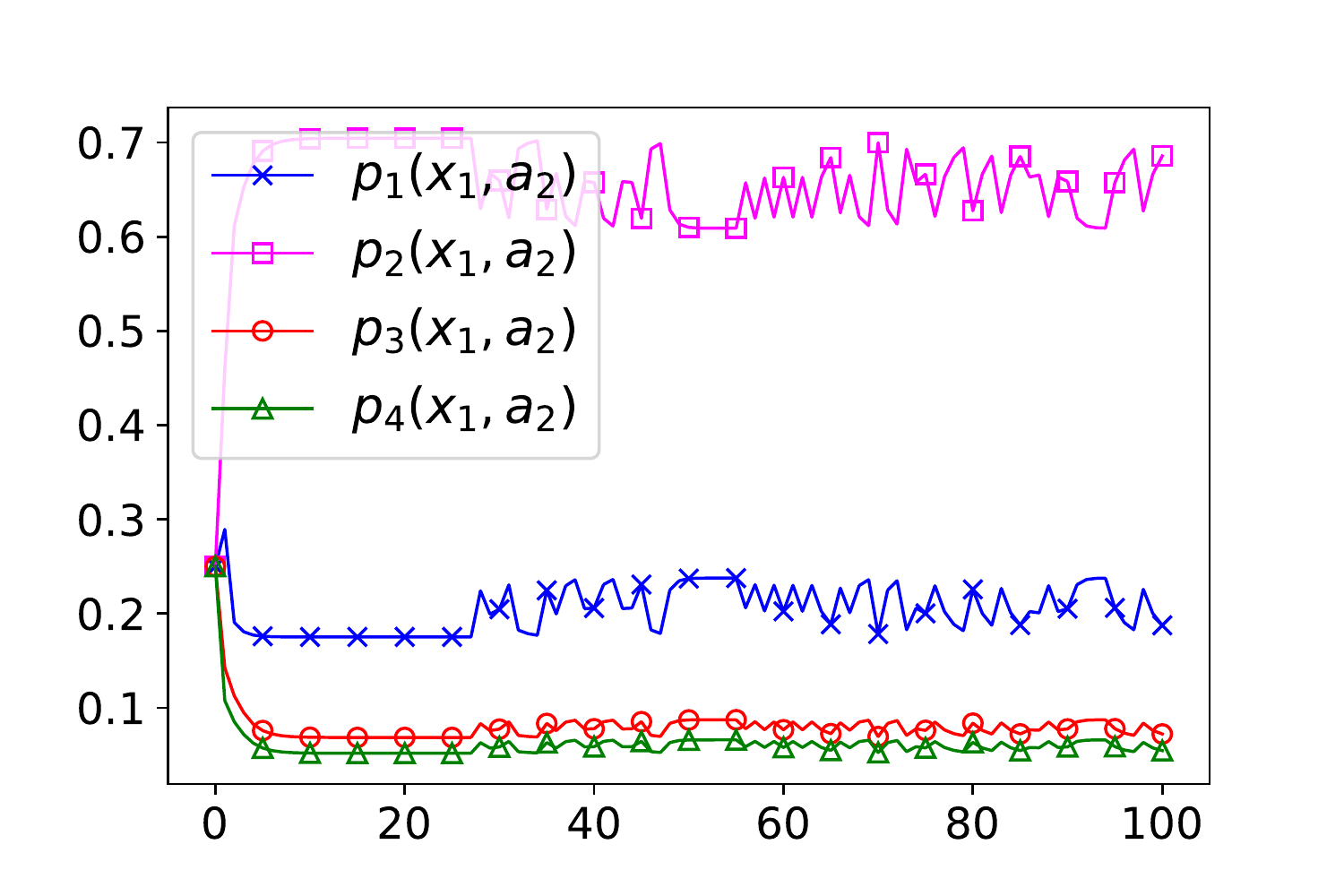}}
    \subcaptionbox{One-step at $(x_1,a_2)$. \label{fig:os_a2}}{%
        \includegraphics[height=1.31in]{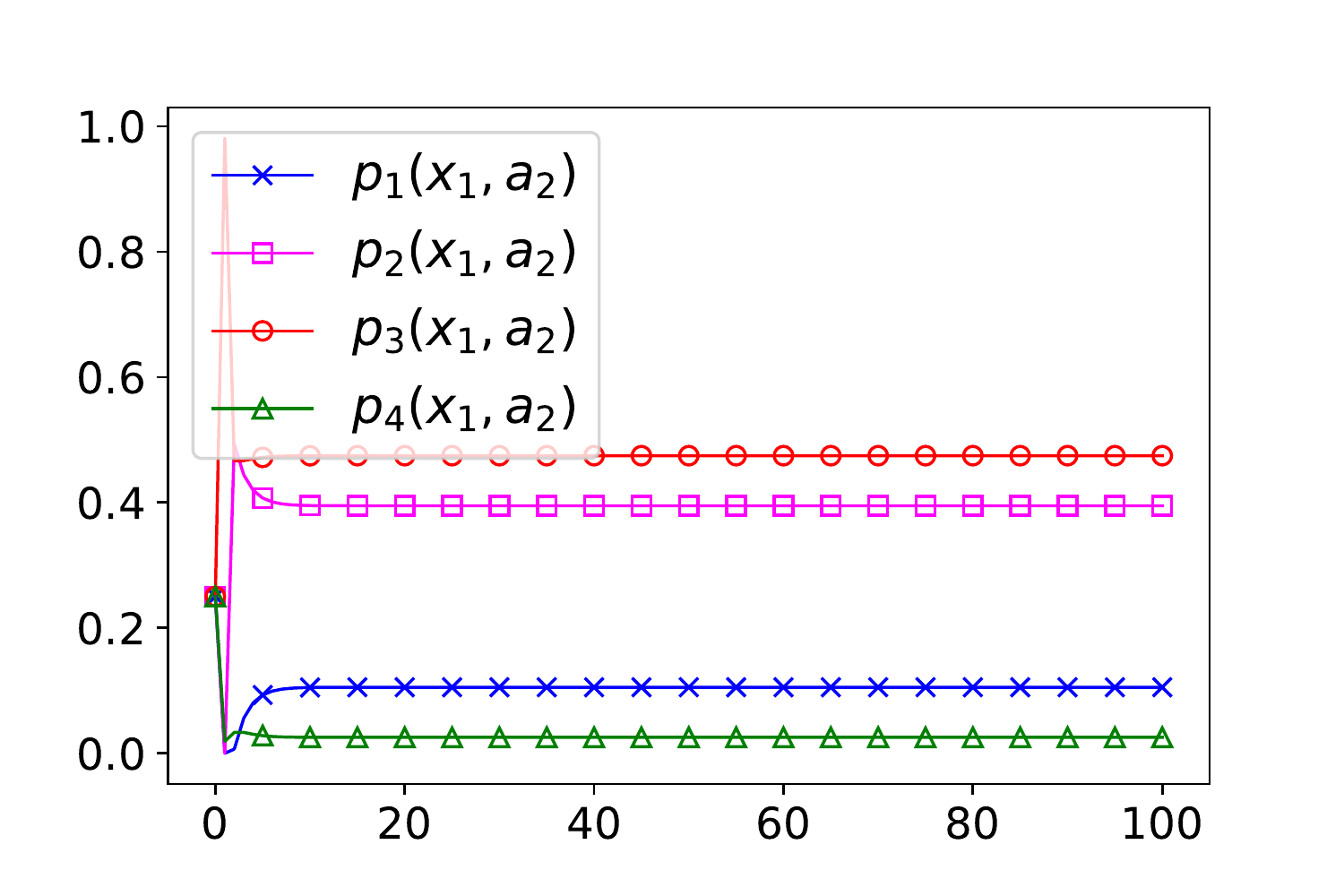}}
    \subcaptionbox{Both Q-functions at $(x_1,a_2)$. \label{fig:Q_a2}}{%
        \includegraphics[height=1.31in]{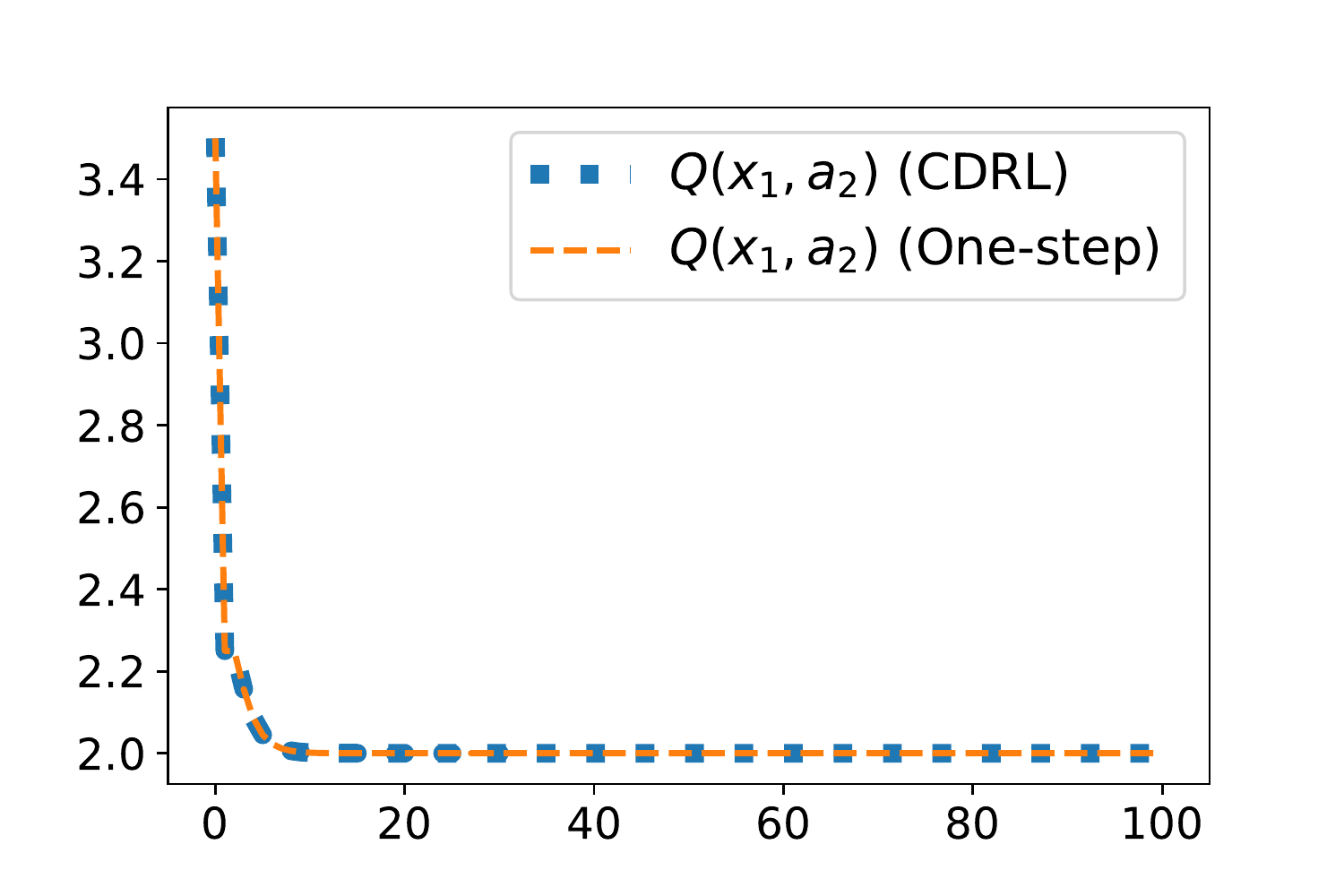}}
    \caption{Instability of categorical distributional control compared to the convergence of our one-step approach. The two leftmost plots are the iterations of the Cram\'er-projected ``full'' optimality operator $\Pi_{\mathcal{C}} \mathcal{T}$, while the center plots are obtained with our one-step operator $\Pi_{\mathcal{C}} \mathscr{T}$. The rightmost plots correspond to the Q-functions $Q(x,a)=\sum_{k=1}^4 p_k(x,a) z_k$\ .}
    \label{fig:os_conv}
\end{figure}

\section{One-step distributional operators}
\label{sec:osop}

This section introduces the building blocks of our new framework.
The main goal here is to show that our one-step approach is theoretically sound, though being less ambitious than ``full'' DistrRL.
The main benefit of this simplification is that it will allow us to derive in the next section a convergence result holding for both evaluation and control. In particular in the control case, we will \emph{not} need any additional assumption such as the uniqueness of the optimal policy as in CDRL~\citep{rowland2018analysis}.


\subsection{Formal definitions}

Let us now define the one-step DistrRL operators. Intuitively, they are similar to the ``full'' DistrRL operator $\mathcal{T}^\pi$ except that they average the randomness after the first random transition and are thus oblivious to the randomness induced by the remaining time steps.

\begin{definition}[\textsc{One-step distributional Bellman operator}]
  \label{def:1SDBO}
  Let $\pi$ be a policy. The one-step distributional Bellman operator $\mathscr{T}^\pi\colon \mathscr{P}_b(\mathbb{R})^{\mathcal{X}\times\mathcal{A}} \rightarrow \mathscr{P}_b(\mathbb{R})^{\mathcal{X}\times\mathcal{A}}$ is defined for any distribution function $\mu$ by
  \begin{equation*}
    (\mathscr{T}^\pi \mu)^{(x,a)} = \sum_{x'} P(x'|x,a) \delta_{r(x,a,x')+\gamma \sum_{a'} \pi(a'|x') \mathbb{E}_{Z\sim\mu^{(x',a')}}[Z]} \ .
  \end{equation*}
\end{definition}

\begin{definition}[\textsc{One-step distributional Bellman optimality operator}]
  \label{def:1SDBOO}
  The one-step distributional Bellman optimality operator $\mathscr{T}\colon \mathscr{P}_b(\mathbb{R})^{\mathcal{X}\times\mathcal{A}} \rightarrow \mathscr{P}_b(\mathbb{R})^{\mathcal{X}\times\mathcal{A}}$ is defined for any $\mu$ by
  \begin{equation*}
    (\mathscr{T} \mu)^{(x,a)} = \sum_{x'} P(x'|x,a) \delta_{r(x,a,x')+\gamma \max_{a'} \mathbb{E}_{Z\sim\mu^{(x',a')}}[Z]} \ .
  \end{equation*}
\end{definition}

\begin{example}
  \label{ex:os}
  Let $\mu$ be the same discrete distribution function as in Example~\ref{ex:full}.
  Then,
  \begin{multline*}
    (\mathscr{T}^\pi \mu)^{(x,a)} = \sum_{x'} P(x'|x,a) \delta_{ r(x,a,x') + \gamma \sum_{a'} \pi(a'|x') Q(x',a') } \\
     \quad \quad \quad \quad \quad \quad \quad
    \text{ and } \quad\quad\quad (\mathscr{T} \mu)^{(x,a)} = \sum_{x'} P(x'|x,a) \delta_{ r(x,a,x') + \gamma \max_{a'} Q(x',a') } \ ,
    \hfill
  \end{multline*}
  where $Q(x',a') = \sum_{k=1}^K p_k(x',a') Q_k(x',a')$ .
\end{example}
Similarly to $\mathcal{T}$, the one-step operators $\mathscr{T}$ and $\mathscr{T}^\pi$ lead to a space complexity issue by producing distributions with a number $|\mathcal{X}|$ of atoms, which can be too demanding for a large state space.

\begin{figure}
    \centering
    \subcaptionbox{$\mu_0^{{(x_1,a_2)}}$ \label{fig:iter0}}{%
        \includegraphics[height=1.31in]{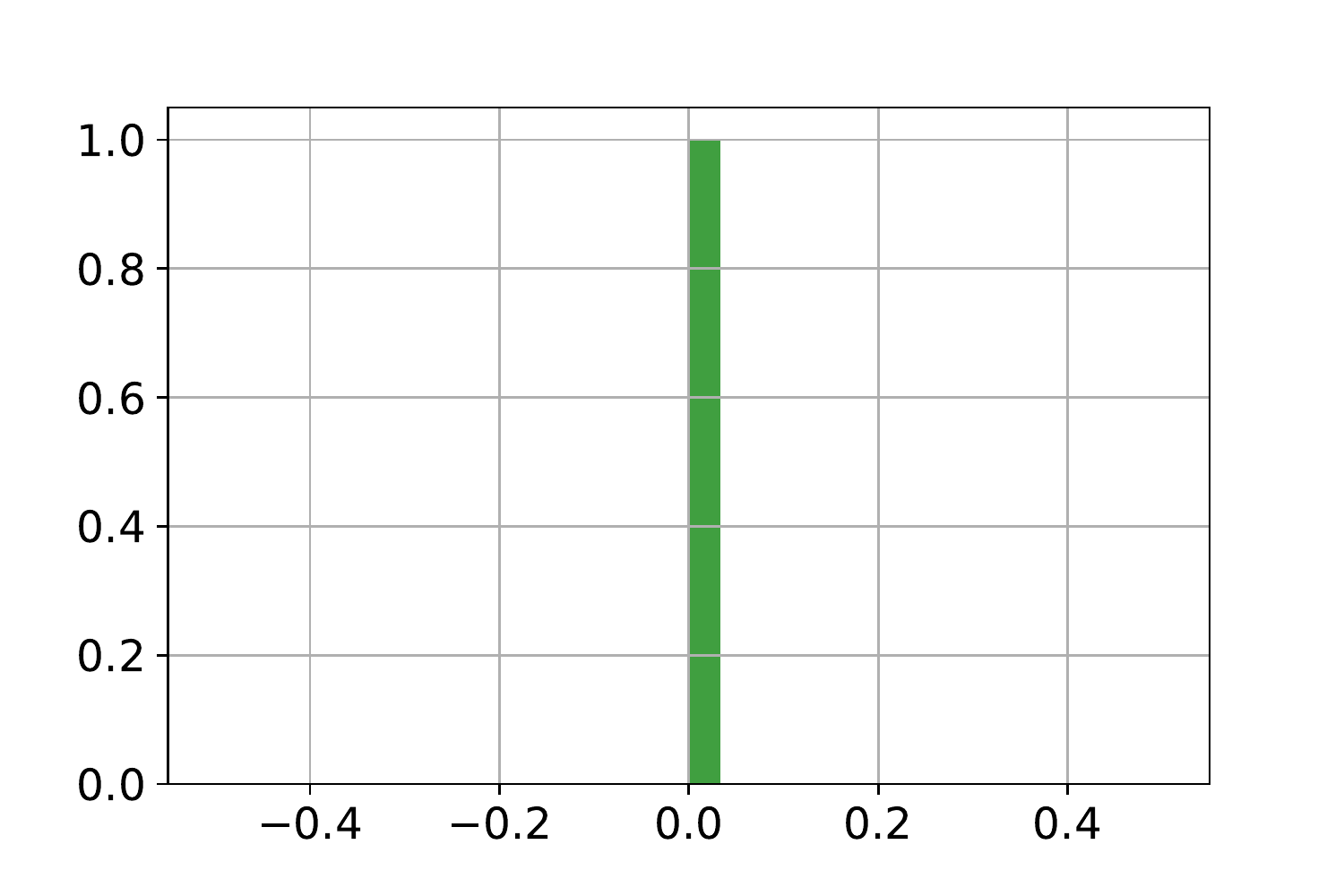}}
    \subcaptionbox{$\mu_2^{{(x_1,a_2)}}$ \label{fig:iter2}}{%
        \includegraphics[height=1.31in]{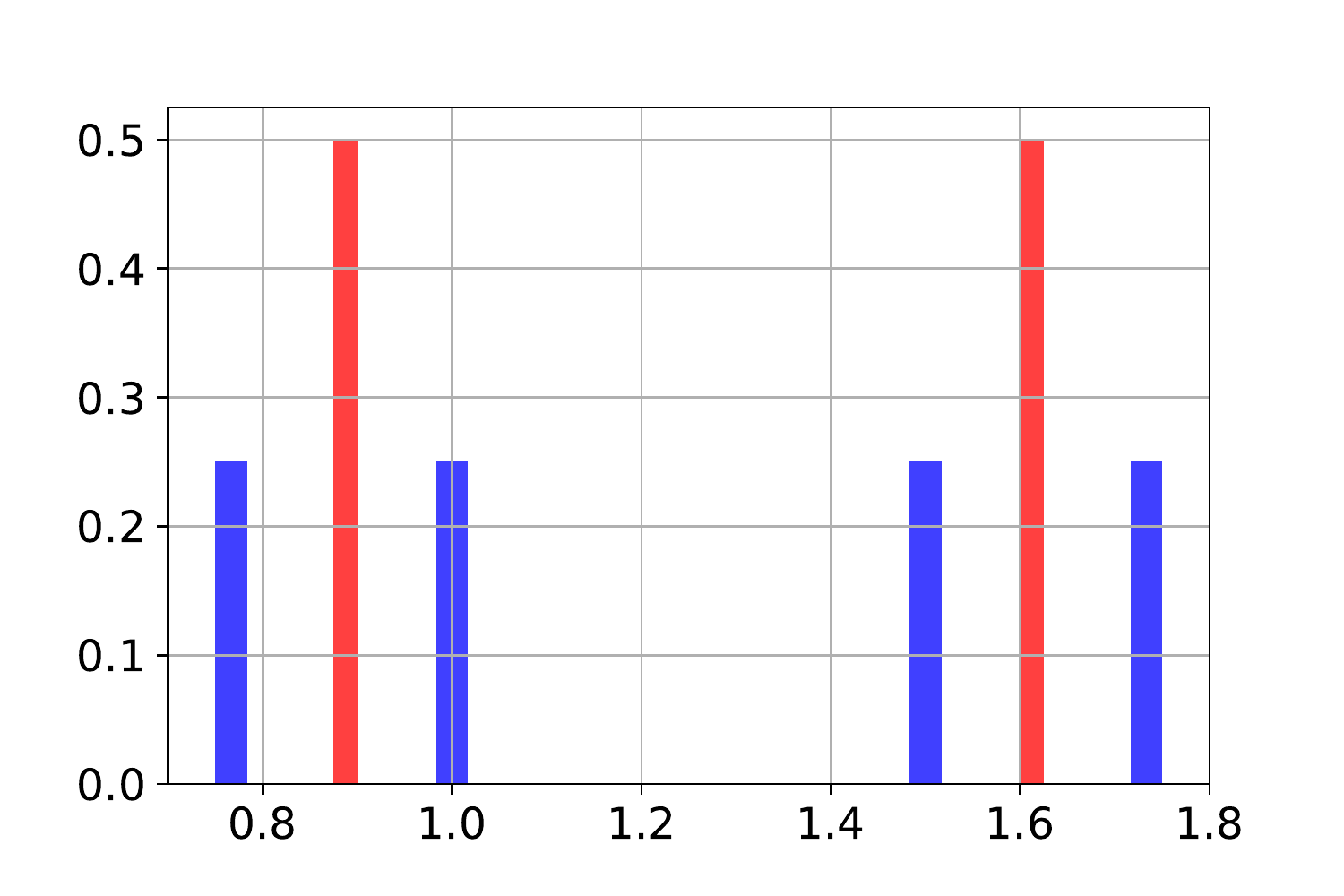}}
    \subcaptionbox{$\mu_4^{{(x_1,a_2)}}$ \label{fig:iter4}}{%
        \includegraphics[height=1.31in]{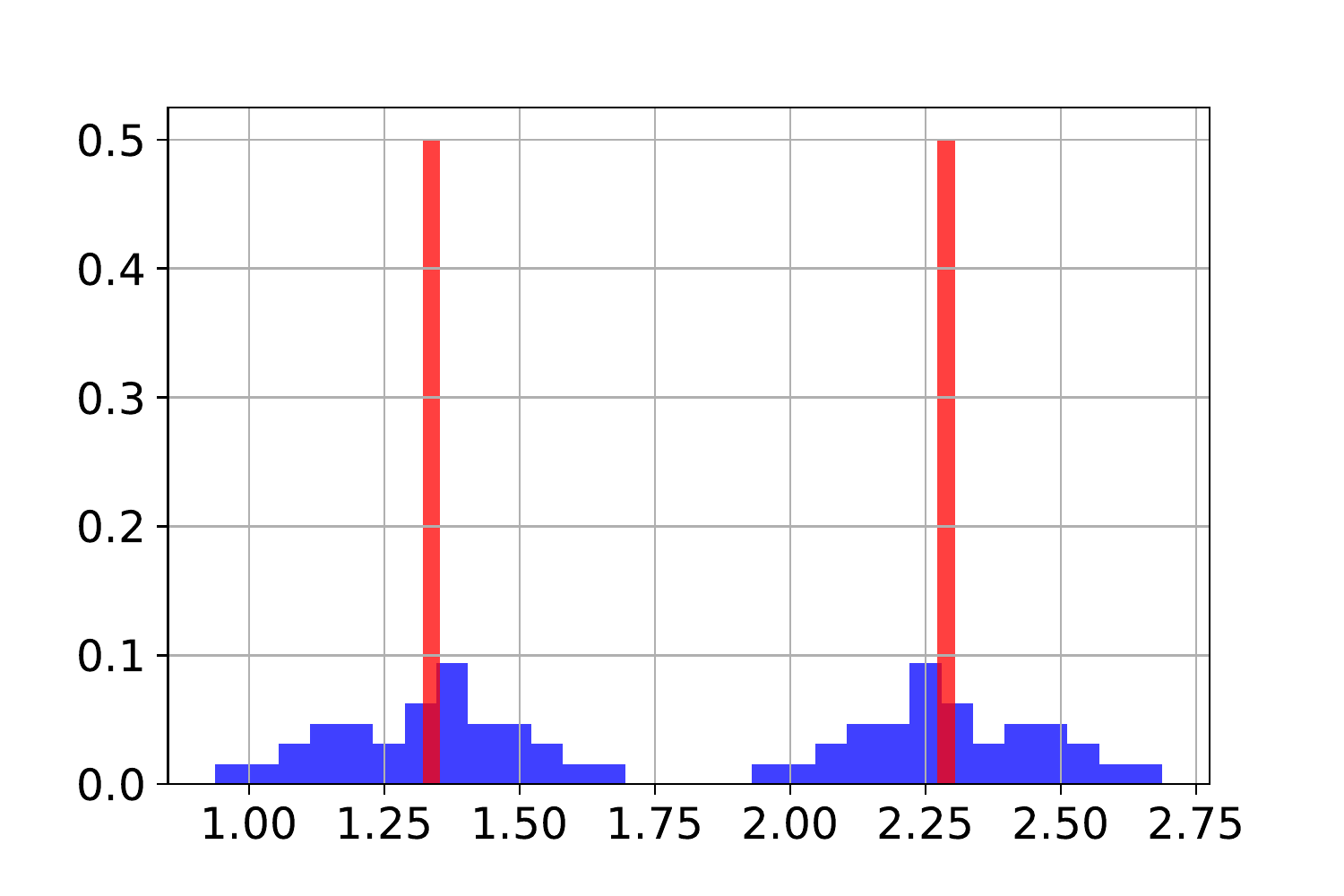}}
    \caption{Illustration of Examples~\ref{ex:full}-\ref{ex:os} in the two-states two-actions MDP depicted in Figure~\ref{fig:mdp}, with discount factor $\gamma=\frac{1}{2}$,
    for the stochastic policy $\pi(a|x)=\frac{1}{2}$ for all $x,a$.
    The probability distribution $\mu_j^{(x_1,a_2)}$ with histogram represented in red (resp. in blue) is obtained by applying $j$ successive times the distributional operator $\mathscr{T}^\pi$ (resp. $\mathcal{T}^\pi$) to the initial distributions $\mu_0^{(x,a)}=\delta_{0}$.}
    \label{fig:hist}
\end{figure}

\subsection{Main properties}

Let us now discuss some key properties satisfied by our new one-step distributional operators.
We first show that, similarly to the non-distributional setting, our approach comes with contractive operators in both evaluation and control.

\begin{proposition}[\textsc{Contractivity}]
  \label{prop:contraction}
  Let $\pi$ be a policy.
  \begin{enumerate}[(i)]
    \item For any $1\le p\le \infty$, the one-step operators $\mathscr{T}^\pi$ and $\mathscr{T}$ are $\gamma$-contractions in $\overline{W}_p$.
    \item The Cram\'er-projected one-step operators $\Pi_{\mathcal{C}}\circ \mathscr{T}^\pi $ and $\Pi_{\mathcal{C}}\circ \mathscr{T}$ are $\gamma$-contractions in $\overline{W}_1$.
  \end{enumerate}
\end{proposition}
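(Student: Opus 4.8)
The plan is to exploit the defining feature of the one-step operators: the output collection $\mathscr{T}^\pi\mu$ (and likewise $\mathscr{T}\mu$) depends on $\mu$ only through its vector of means $Q_\mu(x,a)=\mathbb{E}_{Z\sim\mu^{(x,a)}}[Z]$, and each output distribution $(\mathscr{T}^\pi\mu)^{(x,a)}$ is a finite mixture of Dirac masses whose mixing weights $P(x'|x,a)$ do not depend on $\mu$. This shared mixture structure lets me compare two inputs $\mu_1,\mu_2$ through a single natural coupling.

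For part (i), I would fix $(x,a)$ and compare $(\mathscr{T}^\pi\mu_1)^{(x,a)}$ with $(\mathscr{T}^\pi\mu_2)^{(x,a)}$ via the ``diagonal'' coupling that transports, for each next state $x'$, the atom of mass $P(x'|x,a)$ located at $r(x,a,x')+\gamma\sum_{a'}\pi(a'|x')Q_{\mu_1}(x',a')$ onto the corresponding atom for $\mu_2$. Since any coupling upper-bounds the Wasserstein cost, this gives, for $1\le p<\infty$,
\[
W_p^p\big((\mathscr{T}^\pi\mu_1)^{(x,a)},(\mathscr{T}^\pi\mu_2)^{(x,a)}\big)\le \gamma^p\sum_{x'}P(x'|x,a)\Big|\sum_{a'}\pi(a'|x')\big(Q_{\mu_1}(x',a')-Q_{\mu_2}(x',a')\big)\Big|^p,
\]
and the analogue with $\max_{a'}$ in place of $\sum_{a'}\pi(a'|x')$ for $\mathscr{T}$. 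The two elementary facts I then invoke are that the mean is $1$-Lipschitz at every order, $|Q_{\mu_1}(x',a')-Q_{\mu_2}(x',a')|\le W_1(\mu_1^{(x',a')},\mu_2^{(x',a')})\le W_p(\mu_1^{(x',a')},\mu_2^{(x',a')})\le\overline{W}_p(\mu_1,\mu_2)$ (the first inequality from the quantile representation of the mean, the second from monotonicity of $W_p$ in $p$), and the nonexpansiveness of the max, $|\max_{a'}Q_{\mu_1}-\max_{a'}Q_{\mu_2}|\le\max_{a'}|Q_{\mu_1}-Q_{\mu_2}|$. Because $\pi(\cdot|x')$ and $P(\cdot|x,a)$ are probability vectors, both sums collapse and the right-hand side is at most $\gamma^p\,\overline{W}_p(\mu_1,\mu_2)^p$; taking $p$-th roots and then the maximum over $(x,a)$ yields the $\gamma$-contraction. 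The case $p=\infty$ is identical, using that the diagonal coupling bounds the essential supremum $\|X-Y\|_\infty$ by $\max_{x'}\gamma|\cdots|$.

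For part (ii) I keep the same coupling but insert the Cram\'er projection. By linearity of $\Pi_{\mathcal{C}}$ over mixtures, $(\Pi_{\mathcal{C}}\mathscr{T}^\pi\mu_i)^{(x,a)}=\sum_{x'}P(x'|x,a)\,\Pi_{\mathcal{C}}(\delta_{z_i(x')})$ with $z_i(x')=r(x,a,x')+\gamma\sum_{a'}\pi(a'|x')Q_{\mu_i}(x',a')$, a mixture with the same weights for $i=1,2$. Joint convexity of $W_1$ over mixtures with common weights reduces the problem to bounding $W_1(\Pi_{\mathcal{C}}(\delta_{z_1(x')}),\Pi_{\mathcal{C}}(\delta_{z_2(x')}))$ atomwise. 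The crucial observation is that $a\mapsto\Pi_{\mathcal{C}}(\delta_a)$ is stochastically monotone (a larger $a$ shifts the two-point projection to the right), so for $a,b\in[z_1,z_K]$ the laws $\Pi_{\mathcal{C}}(\delta_a)$ and $\Pi_{\mathcal{C}}(\delta_b)$ are stochastically ordered; for ordered laws the $W_1$ distance equals the gap between their means, which by the mean-preserving property $\eqref{eq:mean_preserving}$ equals $|a-b|$. Hence each atomwise term equals $\gamma\big|\sum_{a'}\pi(a'|x')(Q_{\mu_1}-Q_{\mu_2})\big|$ (respectively $\gamma|\max_{a'}Q_{\mu_1}-\max_{a'}Q_{\mu_2}|$ for $\mathscr{T}$), and the argument closes exactly as in part (i) with $p=1$.

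The routine pieces are the coupling bound and the mean-Lipschitz estimate; the step carrying the real content is the identity $W_1(\Pi_{\mathcal{C}}(\delta_a),\Pi_{\mathcal{C}}(\delta_b))=|a-b|$, which is what lets the projection be absorbed for free into the $\overline{W}_1$ contraction. The one caveat I would flag is the implicit requirement that the grid $[z_1,z_K]$ contain all relevant atom locations $z_i(x')$, so that both the projection formula and the mean-preserving property of $\Pi_{\mathcal{C}}$ apply; under the standing boundedness of rewards this holds once the support is chosen wide enough.
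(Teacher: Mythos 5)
Your proof is correct, and part (i) follows essentially the paper's own route: the diagonal coupling over next states, the non-expansiveness of $\max$ (resp.\ of the $\pi$-average), and the mean-Lipschitz bound $|Q_{\mu_1}-Q_{\mu_2}|\le W_p$ via the quantile representation. Where you genuinely diverge is in part (ii), specifically in how you establish the key estimate $W_1(\Pi_{\mathcal{C}}\delta_a,\Pi_{\mathcal{C}}\delta_b)\le|a-b|$. The paper isolates this as a standalone lemma and proves it by an exhaustive five-case computation on the positions of $a$ and $b$ relative to the grid. You instead observe that $a\mapsto\Pi_{\mathcal{C}}(\delta_a)$ is stochastically monotone, that for stochastically ordered laws on $\mathbb{R}$ the $W_1$ distance equals the difference of means (since $\int|F_1-F_2|=\int(F_2-F_1)$), and that the mean of $\Pi_{\mathcal{C}}(\delta_a)$ is $a$ itself on $[z_1,z_K]$ by the mean-preserving property. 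This is shorter and more conceptual, and it explains \emph{why} the bound is tight rather than merely verifying it. One loose end: you flag that your argument needs $a,b\in[z_1,z_K]$ and resolve it by assuming the support is ``wide enough,'' but the proposition (and the paper's lemma) makes no such assumption. The fix is immediate within your own framework: the mean of $\Pi_{\mathcal{C}}(\delta_a)$ is exactly the clipping of $a$ to $[z_1,z_K]$ for \emph{all} $a\in\mathbb{R}$, and clipping is $1$-Lipschitz, so stochastic monotonicity still gives $W_1(\Pi_{\mathcal{C}}\delta_a,\Pi_{\mathcal{C}}\delta_b)=|\mathrm{clip}(a)-\mathrm{clip}(b)|\le|a-b|$ with no restriction on $a,b$. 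With that one sentence added, your argument recovers the paper's Lemma A.1 in full generality and the rest of part (ii) (convexity of $W_1$ over mixtures with common weights, then the part-(i) estimates at $p=1$) goes through as you describe.
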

We stress that Proposition \ref{prop:contraction} highly contrasts with classic DistrRL
where the control operator $\mathcal{T}$ is not a contraction in any metric.
A major consequence of contractivity is the existence and uniqueness of fixed points,
whose explicit formulas are gathered in the next proposition and in Table \ref{tab:os_vs_distr}.

\begin{proposition}[\textsc{Fixed points}]
  \label{prop:fixedpoint}
  Let $\pi$ be a policy and consider $\Pi_{\mathcal{C}}$ from Definition \ref{def:Cramer}.
  \begin{enumerate}[(i)]
  \item The unique fixed point of $\mathscr{T}^\pi$ is $\nu_\pi$ given by
  \begin{equation*}
    \nu_\pi^{(x,a)} = \sum_{x'} P(x'|x,a) \delta_{r(x,a,x') + \gamma V^\pi(x')} \, .
  \end{equation*}
  \item The unique fixed point of $\mathscr{T}$ is $\nu_*$ given by
  \begin{equation*}
    \nu_*^{(x,a)} = \sum_{x'} P(x'|x,a) \delta_{r(x,a,x') + \gamma V^*(x')} \, .
  \end{equation*}
  \item If $ z_1 \le r(x,a,x') + \gamma V^\pi(x') \le z_K $ for all triplets $(x,a,x')$, then the unique fixed point of
  $\Pi_{\mathcal{C}}\circ \mathscr{T}^\pi$ is $\eta_\pi = \Pi_{\mathcal{C}}( \nu_\pi )$.
  \item If $ z_1 \le r(x,a,x') + \gamma V^*(x') \le z_K $ for all triplets $(x,a,x')$, then the unique fixed point of
  $\Pi_{\mathcal{C}}\circ \mathscr{T}$ is $\eta_* = \Pi_{\mathcal{C}}( \nu_* )$.
\end{enumerate}
\end{proposition}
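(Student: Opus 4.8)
The plan is to obtain uniqueness for free from the contractivity already established in Proposition~\ref{prop:contraction}, and then to verify by direct substitution that each claimed distribution is indeed fixed. Since $\mathscr{T}^\pi$, $\mathscr{T}$, $\Pi_{\mathcal{C}}\circ\mathscr{T}^\pi$ and $\Pi_{\mathcal{C}}\circ\mathscr{T}$ are all contractions on the complete metric space $(\mathscr{P}_b(\mathbb{R})^{\mathcal{X}\times\mathcal{A}},\overline{W}_p)$, Banach's fixed point theorem guarantees a \emph{unique} fixed point for each, so it suffices to exhibit one candidate per case. The single observation driving every part is that the one-step operators depend on their input $\mu$ only through the scalar per-entry means $m^{(x',a')}:=\mathbb{E}_{Z\sim\mu^{(x',a')}}[Z]$, so identifying a fixed point reduces to a purely scalar Bellman-type recursion.

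For part (i) I would first compute the mean of the candidate $\nu_\pi$ entrywise,
\[
\mathbb{E}_{Z\sim\nu_\pi^{(x',a')}}[Z]=\sum_{x''}P(x''|x',a')\bigl[r(x',a',x'')+\gamma V^\pi(x'')\bigr],
\]
and the key recognition is that the right-hand side equals exactly $Q^\pi(x',a')$, by the scalar Bellman equation $Q^\pi=T^\pi Q^\pi$ combined with $V^\pi(x'')=\sum_{a''}\pi(a''|x'')Q^\pi(x'',a'')$. Feeding these means back into Definition~\ref{def:1SDBO} and using $\sum_{a'}\pi(a'|x')Q^\pi(x',a')=V^\pi(x')$ collapses the inner sum to $\gamma V^\pi(x')$, giving $(\mathscr{T}^\pi\nu_\pi)^{(x,a)}=\sum_{x'}P(x'|x,a)\delta_{r(x,a,x')+\gamma V^\pi(x')}=\nu_\pi^{(x,a)}$. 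Part (ii) is identical with the $\pi$-average replaced by a maximum: the mean of $\nu_*^{(x',a')}$ is $Q^*(x',a')$ by the optimality equation $Q^*=TQ^*$, and $\max_{a'}Q^*(x',a')=V^*(x')$ closes the recursion through Definition~\ref{def:1SDBOO}.

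For parts (iii)--(iv) the extra ingredient is the mean-preserving property of the Cram\'er projection stated in \eqref{eq:mean_preserving}. The support assumptions are precisely what let this property apply: the atoms of $\nu_\pi^{(x',a')}$ (resp.\ $\nu_*^{(x',a')}$) are the values $r(x,a,x')+\gamma V^\pi(x')$ (resp.\ with $V^*$), which the hypotheses force into $[z_1,z_K]$. Hence $\mathbb{E}_{Z\sim\eta_\pi^{(x',a')}}[Z]=\mathbb{E}_{Z\sim\nu_\pi^{(x',a')}}[Z]=Q^\pi(x',a')$, so projecting leaves the means — and therefore the output of $\mathscr{T}^\pi$ — untouched, yielding $\mathscr{T}^\pi\eta_\pi=\nu_\pi$. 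Applying $\Pi_{\mathcal{C}}$ once more gives $\Pi_{\mathcal{C}}\mathscr{T}^\pi\eta_\pi=\Pi_{\mathcal{C}}\nu_\pi=\eta_\pi$, and the same chain with $\max_{a'}$ in place of the $\pi$-average handles (iv).

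The computations are routine once this structure is seen; the only genuine subtlety — and the step I would double-check most carefully — is verifying that the stated support conditions are exactly the hypotheses needed to invoke \eqref{eq:mean_preserving}, since the mean-preserving guarantee would fail if any atom of $\nu_\pi$ or $\nu_*$ escaped the clipping window $[z_1,z_K]$. Everything else amounts to bookkeeping around the identification of the entrywise means with the classical $Q$-functions $Q^\pi$ and $Q^*$.
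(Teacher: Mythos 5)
Your proof is correct and follows essentially the same route as the paper: uniqueness via Proposition~\ref{prop:contraction} plus Banach's fixed point theorem, direct verification that the entrywise means of $\nu_\pi$ and $\nu_*$ are $Q^\pi$ and $Q^*$ via the scalar Bellman equations, and the mean-preserving property of $\Pi_{\mathcal{C}}$ (enabled by the support hypotheses) for parts (iii)--(iv). Your explicit remark that the support conditions are exactly what licenses Eq.~\eqref{eq:mean_preserving} is the one point the paper leaves implicit, and it is worth keeping.
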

Interestingly, Proposition \ref{prop:fixedpoint}-(iii)-(iv) shows that, in our one-step framework,
the fixed point of a projected operator is simply the projection of the fixed point of the unprojected operator.
Although this fact seems natural, it is not necessarily true in DistrRL.
Indeed, the proof of Proposition 3 in \citep{rowland2018analysis} suggests that the fixed point of $\Pi_{\mathcal{C}} \circ \mathcal{T}^\pi$ is a worse approximation of $\mu_\pi$ than is $\Pi_{\mathcal{C}} \mu_\pi$, by a multiplicative factor $\sqrt{1/(1-\gamma)}$ in terms of Cram\'er distance.
Equipped with our projected one-step operators, we derive in the next section variants of the CDRL algorithms.

\begin{table}

\begin{center}
\begin{tabular}{ |c|c|c| }
 \hline
 \  & DistrRL & One-step DistrRL \\
 \hline
 Evaluation & $\text{Distr}\left( \sum_{t=0}^\infty \gamma^t r(X_t,A_t,X_{t+1}) \right)$ & $\sum_{x'}P(x'|x,a)\delta_{r(x,a,x')+
 \gamma V^\pi(x')}$ \\
 \hline
 Control & does not necessarily exist & $\sum_{x'}P(x'|x,a)\delta_{r(x,a,x')+
 \gamma V^*(x')}$ \\
 \hline
\end{tabular}
\end{center}

\caption{Comparison of the fixed points of the distributional Bellman operators in DistrRL versus one-step DistrRL.}
\label{tab:os_vs_distr}
\end{table}

\section{One-step DistrRL algorithms}
\label{sec:OSalgo}

This section introduces new categorical algorithms based on the Cram\'er projection together with formal convergence guarantees.

\subsection{One-Step CDRL}

We propose two algorithms, for policy evaluation and control respectively, that are described in Algorithm~\ref{alg:OS_tabular}.
These new categorical methods are derived from the stochastic approximation of the projected operators $\Pi_{\mathcal{C}}\circ \mathscr{T}^\pi$ and $\Pi_{\mathcal{C}}\circ \mathscr{T}$.
For each state-action pair $(x,a)$, both methods learn a discrete probability distribution $p_1(x,a),\dots,p_K(x,a)$ over some fixed support $z_1 < \dots < z_K$.
As in classic RL algorithms, we consider a sequence of stepsizes $\alpha_t(x,a) \ge 0$ indexed by states, actions and time steps $t\ge0$.
At each time $t$, we perform a mixture update between the current distribution and a distributional target which is the Cram\'er projection of a single Dirac mass
located at the target of TD(0) or Q-learning, computed from a single transition $(x_t,a_t,r_t,x_{t+1})$.
The only difference with tabular CDRL lies in the target as shown in Table~\ref{tab:categ_target}.
Contrary to the original CDRL target, ours remains the same whatever the greedy action $a^*$ we choose inside the set $\argmax_{a'} Q_t(x_{t+1},a')$: this explains why our approach is stable even when the optimal policy is not unique as illustrated in Figure~\ref{fig:os_conv}.
Moreover, our categorical target is faster to compute than in CDRL, where the time complexity $\mathcal{O}(K \log(K))$ pays the price for
inserting every atom $r_t+\gamma z_k$ into the sorted array $(z_1,\dots,z_K)$.
\begin{algorithm}[ht!]

\caption{Tabular one-step categorical DistrRL}
\label{alg:OS_tabular}

\begin{algorithmic}

\REQUIRE $\eta_t^{(x,a)} = \sum_{k=1}^K p_{t,k}(x,a) \delta_{z_k}$ for all $(x,a)$

\STATE Sample transition: $(x_t,a_t,r_t,x_{t+1})$
\STATE Estimate Q-values: $Q_t(x_{t+1},a) \gets \sum_{k=1}^K p_{t,k}(x_{t+1},a) \cdot z_k$

\IF{ \text{policy evaluation} }
  \STATE $ \widehat{\eta}_t^{(x_t,a_t)} \gets \Pi_{\mathcal{C}}( \delta_{r_t + \gamma \sum_{a'} \pi(a'|x_{t+1}) Q_t(x_{t+1},a') } ) $
\ELSIF{ \text{control} }
  \STATE $ \widehat{\eta}_t^{(x_t,a_t)} \gets \Pi_{\mathcal{C}}( \delta_{r_t + \gamma \max_{a'} Q_t(x_{t+1},a') } ) $
\ENDIF

\STATE Mixture update: $ \eta_{t+1}^{(x_t,a_t)} \gets (1-\alpha_t(x_t,a_t)) \eta_t^{(x_t,a_t)} + \alpha_t(x_t,a_t) \widehat{\eta}_t^{(x_t,a_t)} $

\STATE $ \eta_{t+1}^{(x,a)} \gets \eta_t^{(x,a)} \ , \ \forall (x,a)\neq (x_t,a_t) $

\ENSURE $\eta_{t+1}$
\end{algorithmic}

\end{algorithm}
\begin{table}

\begin{center}
\begin{tabular}{ |c|c|c| }
 \hline
 \  & CDRL & One-step CDRL \\
 \hline
 Categorical target & $ \Pi_{\mathcal{C}}\Bigl( \sum_{k=1}^K p_{t,k}(x_{t+1}, a^*) \delta_{r_t + \gamma z_k } \Bigr) $ & $ \Pi_{\mathcal{C}}\Bigl( \delta_{r_t + \gamma Q_t(x_{t+1},a^*) } \Bigr) $ \\
 \hline
 Time complexity & $\mathcal{O}(K \log K)$ & $\mathcal{O}(\log K)$ \\
 \hline
\end{tabular}
\end{center}

\caption{Comparison of the categorical targets in CDRL versus one-step CDRL. The notation $a^*$ refers to a greedy action, namely
$a^*\in\argmax_{a'} Q_t(x_{t+1},a') $\,.}
\label{tab:categ_target}
\end{table}
Before moving forward to the convergence analysis of Algorithm~\ref{alg:OS_tabular},
we propose its deep RL counterpart in Algorithm~\ref{alg:c4} based on the minimization of a Kullback-Leibler (KL) loss.

\paragraph{Deep one-step CDRL.}
The main challenge in a non-tabular context is to learn the distributions in a compact and efficient way. For that purpose, we use the same deep categorical approach as in C51 \citep{bellemare2017distributional}.
%
As shown in Table~\ref{tab:os_vs_distr}, the one-step method aims at learning a much simpler distribution than full DistrRL.
This suggests choosing a small number of categories, e.g. $K = 4$ used in Section \ref{sec:experiments}, compared to what is commonly used in CDRL.
Due to its similarity with \textsc{C51}, we call this new algorithm ``\textsc{OS-C51}''.

\begin{algorithm}

\caption{\textsc{OS-C51} (single update)}
\label{alg:c4}

\begin{algorithmic}

\REQUIRE categorical distributions $\eta_\theta^{(x,a)}=\sum_{k=1}^K p_{\theta,k}(x,a) \delta_{z_k}$ and a transition $( x_t,a_t,r_t, x_{t+1} )$

\STATE Compute Q-function in next state:
    $ Q(x_{t+1}, a') \gets \sum_{k=1}^K p_{\theta,k}(x_{t+1},a') z_k $
\STATE Compute categorical target:
$ \widehat{\eta}^{(x_t,a_t)} \gets \Pi_\mathcal{C}( \delta_{r_t + \gamma \max_{a'} Q(x_{t+1},a') } ) $

\ENSURE $\text{KL}( \widehat{\eta}^{(x_t,a_t)} \| \eta_\theta^{(x_t,a_t)} )$
\end{algorithmic}

\end{algorithm}

\subsection{Convergence analysis}
\label{subsec:analysis}

We now provide convergence guarantees for our tabular one-step DistrRL algorithms.
The major difference with the analysis of CDRL~\citep{rowland2018analysis} is that we do not require the uniqueness of the optimal policy in the case of control.
We also rely on the existing analysis of non-distributional RL~\citep{dayan1992convergence,tsitsiklis1994asynchronous,jaakkola1993convergence}.
In particular, we require the following standard assumption.

\begin{assumption}
  \label{ass:RobbinsMonro}
  For any pair $(x,a)\in\mathcal{X} \times \mathcal{A}$, the stepsizes $(\alpha_t(x,a))_{t\ge 0}$ satisfy the Robbins-Monro conditions:
  \begin{equation*}
    \begin{cases}
    \sum_{t=0}^\infty \alpha_t(x,a) = \infty \\
    \sum_{t=0}^\infty \alpha_t(x,a)^2 < \infty
    \end{cases}
    \quad \text{almost surely.}
    \end{equation*}
\end{assumption}
Equipped with stepsizes $(\alpha_t(x,a))$ satisfying the Robbins-Monro conditions,
we are now ready to state our main theoretical contribution: namely, the convergence of Algorithm~\ref{alg:OS_tabular}.

\begin{theorem}
\label{th:convergence}
Consider Algorithm~\ref{alg:OS_tabular} and let us assume that Assumption~\ref{ass:RobbinsMonro} holds for the stepsizes $(\alpha_t(x,a))_{t,x,a}$ .
\begin{enumerate}[(i)]
  \item In the case of evaluation of a policy $\pi$,
  \begin{equation*}
    \overline{W}_1( \eta_t , \eta_\pi ) \xrightarrow{ t \to \infty } 0 \quad \text{ almost surely} \, ,
  \end{equation*}
  where $\eta_\pi$ is defined in Proposition \ref{prop:fixedpoint}-(iii).

  \item In the case of control,
  \begin{equation*}
    \overline{W}_1( \eta_t , \eta_* ) \xrightarrow{ t \to \infty } 0 \quad \text{ almost surely} \, ,
  \end{equation*}
  where $\eta_*$ is defined in Proposition \ref{prop:fixedpoint}-(iv).
\end{enumerate}
\end{theorem}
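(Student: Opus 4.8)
The plan is to exploit the defining feature of the one-step operators, namely that $\mathscr{T}$ and $\mathscr{T}^\pi$ read a distribution $\mu$ only through its per-$(x,a)$ means (Definitions~\ref{def:1SDBO}--\ref{def:1SDBOO} and Example~\ref{ex:os}). This lets me decouple the dynamics of the means from that of the full distributions and treat the two parts in turn: the means obey a classical RL recursion, while the distributional iterate then tracks a slowly converging target.

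\emph{Step 1 (the means follow TD(0)/Q-learning).} Write $Q_t(x,a)=\mathbb{E}_{Z\sim\eta_t^{(x,a)}}[Z]=\sum_k p_{t,k}(x,a)z_k$. Since each $Q_t(x,a)$ is a convex combination of $z_1,\dots,z_K$ it lies in $[z_1,z_K]$, so for a support chosen wide enough to be invariant under the affine Bellman target map (equivalently $z_1\le r_{\min}/(1-\gamma)$ and $z_K\ge r_{\max}/(1-\gamma)$), every target center $r_t+\gamma\sum_{a'}\pi(a'|x_{t+1})Q_t(x_{t+1},a')$ (evaluation) or $r_t+\gamma\max_{a'}Q_t(x_{t+1},a')$ (control) also lies in $[z_1,z_K]$. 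The mean-preserving property~\eqref{eq:mean_preserving} of $\Pi_{\mathcal{C}}$ then shows that taking expectations of the mixture update kills the projection, so that $Q_t(x,a)$ obeys exactly the TD(0) update (evaluation) or the Q-learning update (control). Invoking the classical almost-sure convergence of these schemes under Assumption~\ref{ass:RobbinsMonro} \citep{tsitsiklis1994asynchronous,jaakkola1993convergence}, I obtain $Q_t\to Q^\pi$ (resp.\ $Q_t\to Q^*$) almost surely, hence $V_t(x):=\sum_{a'}\pi(a'|x)Q_t(x,a)\to V^\pi(x)$ (resp.\ $V_t(x):=\max_{a'}Q_t(x,a)\to V^*(x)$). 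This is exactly where the unique-optimal-policy hypothesis of \citep{rowland2018analysis} is avoided: the control target depends on $Q_t$ only through the scalar $\max_{a'}Q_t(x_{t+1},a')$, which converges regardless of which (possibly oscillating) action attains the maximum.

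\emph{Step 2 (the distributions track the converging target).} Fix $(x,a)$ and work on the probability-one event of Step 1, letting $\mathcal{F}_t$ denote the history up to time $t$. Taking conditional expectation over the random next state $x_{t+1}\sim P(\cdot\,|x,a)$ and using linearity of $\Pi_{\mathcal{C}}$, the target satisfies $\mathbb{E}[\widehat{\eta}_t^{(x,a)}\mid\mathcal{F}_t]=(\Pi_{\mathcal{C}}\mathscr{T}\eta_t)^{(x,a)}$ (control; with $\mathscr{T}^\pi$ for evaluation). Because this operator depends on $\eta_t$ only through the means $Q_t$, and the induced map from those means to the resulting categorical distribution is continuous in $W_1$, Step 1 gives $(\Pi_{\mathcal{C}}\mathscr{T}\eta_t)^{(x,a)}\to\eta_*^{(x,a)}$ almost surely, since the limiting means $Q^*$ coincide with the means of the fixed point $\eta_*=\Pi_{\mathcal{C}}(\nu_*)$ of Proposition~\ref{prop:fixedpoint}-(iv) (resp.\ $\eta_\pi$ of Proposition~\ref{prop:fixedpoint}-(iii)). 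Representing each $\eta_t^{(x,a)}$ by its vector of cumulative probabilities, on which $W_1$ is equivalent to a weighted $\ell_1$ norm, the update restricted to the infinitely many times at which $(x_t,a_t)=(x,a)$ becomes a vector Robbins-Monro recursion $\eta_{t+1}^{(x,a)}=(1-\alpha_t(x,a))\eta_t^{(x,a)}+\alpha_t(x,a)(\eta_*^{(x,a)}+b_t+w_t)$ with vanishing drift $b_t\to0$ and a bounded martingale-difference noise $w_t$ (bounded since all distributions live on $[z_1,z_K]$, so all $W_1$ distances are at most $z_K-z_1$). Under Assumption~\ref{ass:RobbinsMonro} a standard Robbins-Siegmund supermartingale argument then yields $\eta_t^{(x,a)}\to\eta_*^{(x,a)}$ almost surely.

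Since $\mathcal{X}\times\mathcal{A}$ is finite, taking the maximum over the finitely many pairs $(x,a)$ upgrades the per-pair convergence to $\overline{W}_1(\eta_t,\eta_*)\to0$ almost surely, giving (ii); the evaluation claim (i) is verbatim the same with $\mathscr{T}^\pi$, $Q^\pi$, $V^\pi$, $\eta_\pi$ in place of $\mathscr{T}$, $Q^*$, $V^*$, $\eta_*$. The main obstacle is the clean decoupling in Step~1: one must guarantee that the targets never leave $[z_1,z_K]$ so that~\eqref{eq:mean_preserving} applies at \emph{every} iteration (hence the invariant-support condition), and one must transfer the asynchronous almost-sure convergence of TD(0)/Q-learning without modification. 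The remaining difficulty lies in Step~2's stochastic-approximation bookkeeping, namely handling the simultaneous presence of a vanishing drift (from the slowly converging means) and a persistent martingale noise, for which the $\overline{W}_1$-contractivity of Proposition~\ref{prop:contraction} certifies that the unique possible limit is indeed the fixed point.
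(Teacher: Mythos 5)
Your proof is essentially correct, but it takes a genuinely different route from the paper. The paper follows the Tsitsiklis-style asynchronous argument adapted from \citet{rowland2018analysis}: it builds monotone bracketing sequences $L_j,U_j$ (via $U_{j+1}=\tfrac12 U_j+\tfrac12\Pi_{\mathcal{C}}\mathscr{T}U_j$ started from $\delta_{z_1},\delta_{z_K}$), proves they converge to the fixed point using stochastic dominance and monotonicity of $\Pi_{\mathcal{C}}\mathscr{T}$, and then shows a sandwich lemma $L_j\le\eta_t\le U_j$ for $t$ large, with the martingale noise controlled measure-by-measure. You instead decouple the problem: the means $Q_t$ obey exactly the TD(0)/Q-learning recursion by the mean-preserving property of $\Pi_{\mathcal{C}}$, so classical results give $Q_t\to Q^*$; the categorical iterates are then a linear stochastic approximation in the CDF coordinates tracking a target that converges to $\eta_*$. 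Your decomposition is arguably more transparent -- it isolates exactly why uniqueness of the optimal policy is irrelevant (the target sees only $\max_{a'}Q_t$) and avoids the stochastic-dominance machinery entirely -- and it yields the convergence of $Q_t$ to $Q^*$ as an explicit byproduct. The one caveat is that your Step 1 requires the invariance condition $z_1\le r_{\min}/(1-\gamma)$ and $z_K\ge r_{\max}/(1-\gamma)$ so that the target never leaves $[z_1,z_K]$ at \emph{any} iteration, whereas the theorem (via Proposition~\ref{prop:fixedpoint}) only assumes $z_1\le r(x,a,x')+\gamma V^*(x')\le z_K$ at the fixed point, and the paper's bracketing proof needs nothing more. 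This is not fatal: without invariance the mean recursion becomes stochastic approximation of a clipped Bellman operator, which is still a $\gamma$-contraction in $\|\cdot\|_\infty$ whose fixed point coincides with $Q^*$ under the weaker fixed-point condition, so your argument can be repaired to match the stated hypotheses; but as written your proof establishes the theorem only under the stronger support assumption, and you should either add that remark or state the extra hypothesis explicitly.
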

The proof of Theorem \ref{th:convergence} is deferred to the Supplementary Material: it follows
the same steps as the proofs of Theorem 2 in \citep{tsitsiklis1994asynchronous} and Theorem 1 in \citep{rowland2018analysis}.
Notably, our analysis remains the same for evaluation and control contrary to \citep{rowland2018analysis}, where
the control case requires additional assumptions (namely, uniqueness of the optimal policy and for all $t\ge 0$, for all $1\le k\le K$ such that $p_{t,k}(x_{t+1}, a^*) \neq 0$, $r_t + \gamma z_k \in [z_1,z_K]$ almost surely).

\section{Numerical experiments}
\label{sec:experiments}
  In this section, we present numerical experiments on both tabular and Atari games environments.


\subsection{Tabular setting}

\begin{figure}
    \centering
    \subcaptionbox{$(x_5,a_3)$ \label{fig:FrozLake_x5_a3}}{%
        \includegraphics[height=1.31in]{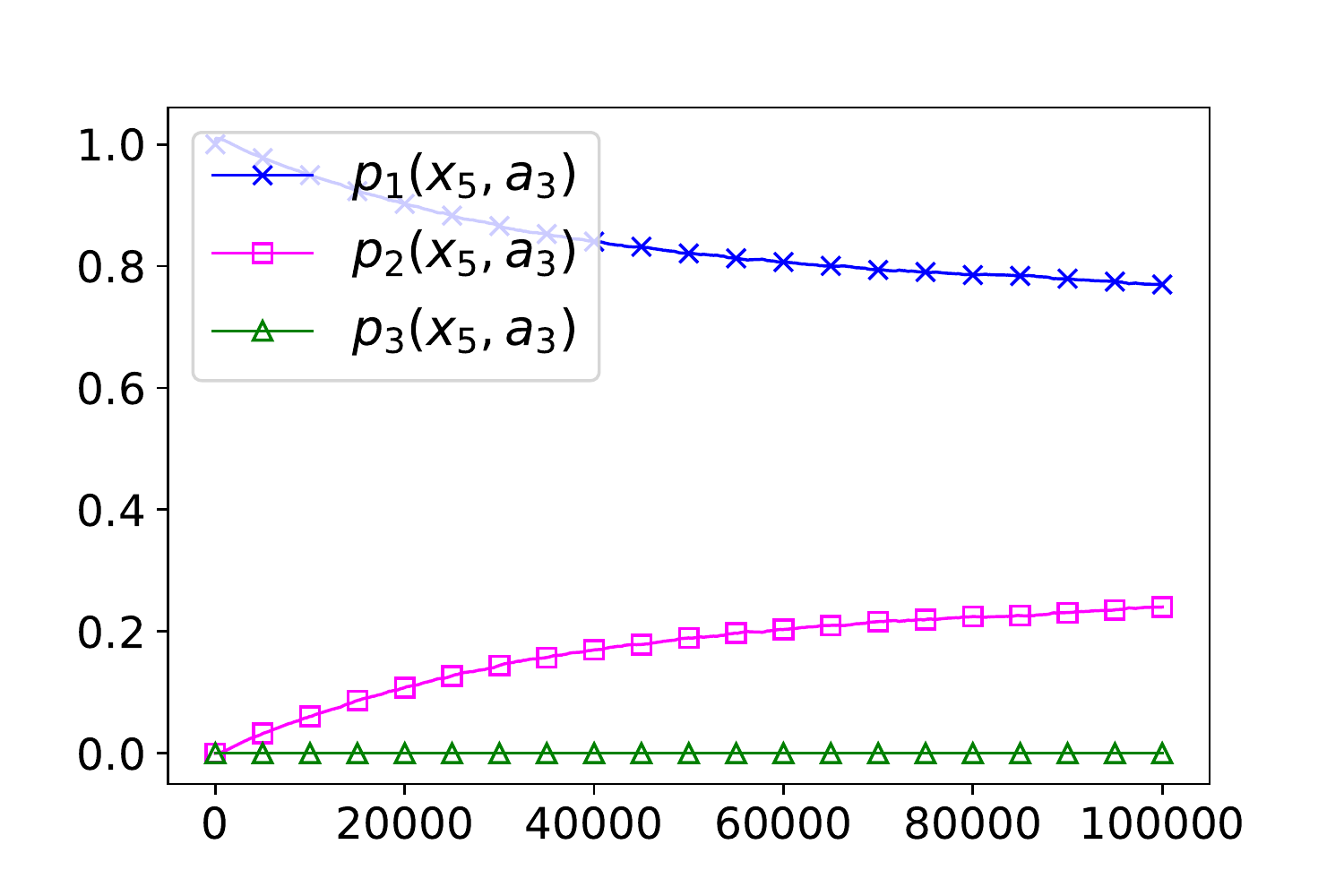}}
    \subcaptionbox{$(x_{11},a_1)$ \label{fig:FrozLake_x2_a7}}{%
        \includegraphics[height=1.31in]{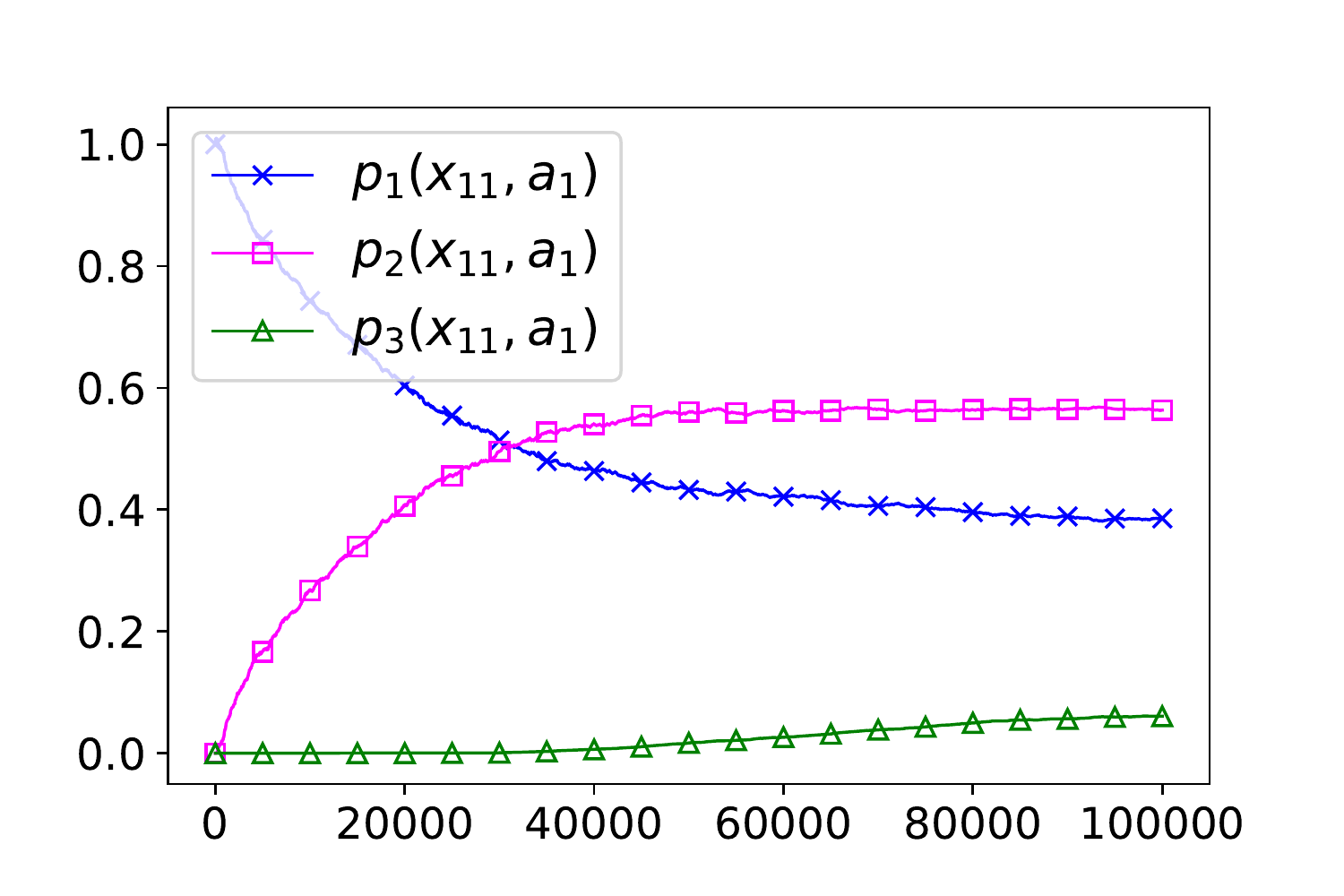}}
    \subcaptionbox{$\|Q_t-Q^*\|_2^2$ \label{fig:Error}}{%
        \includegraphics[height=1.31in]{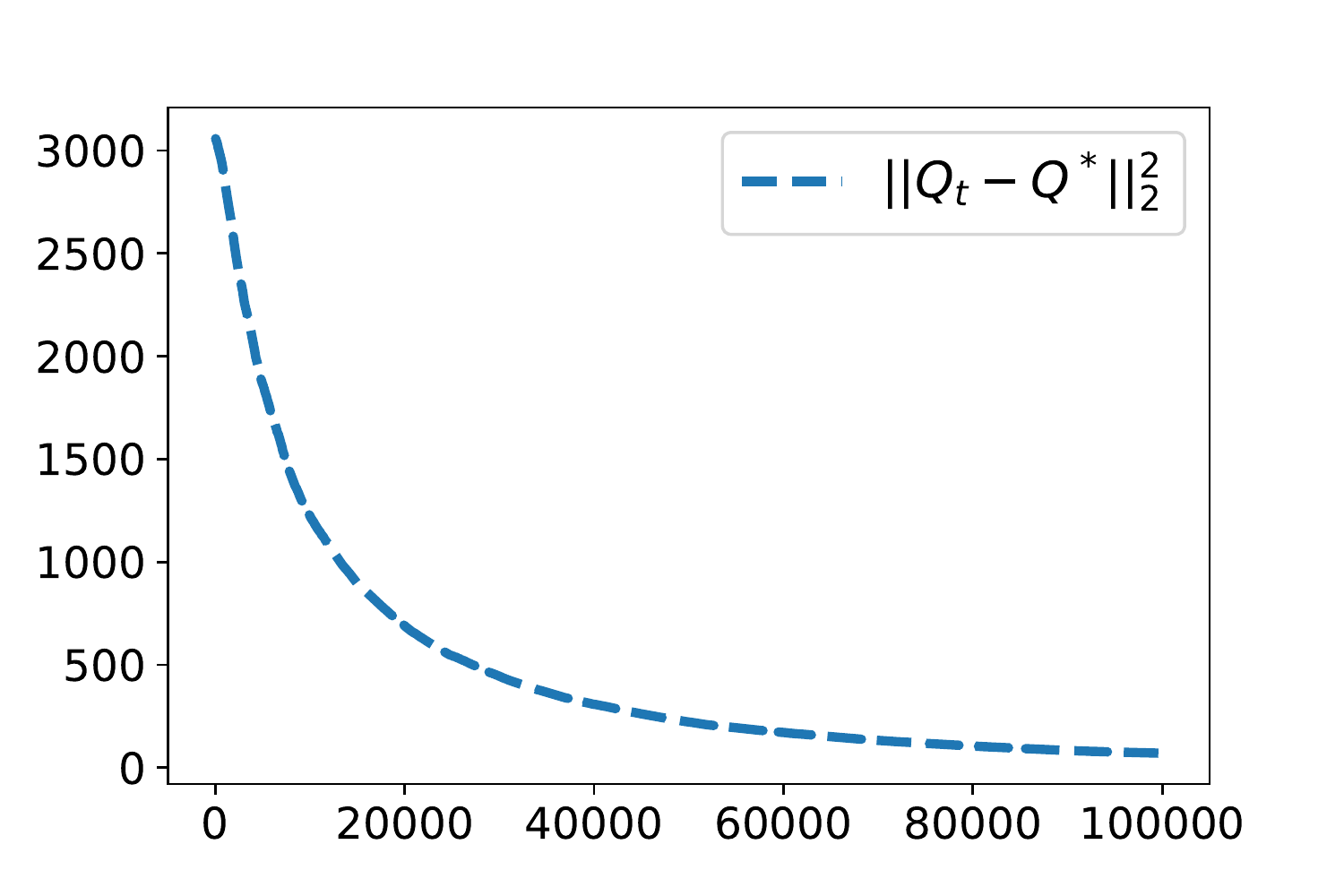}}
    \caption{Learning dynamics of the atomic probability distribution $(p_{t,k}(x,a))_{1\le k \le K=3}$ produced by Algorithm~\ref{alg:OS_tabular} (control) on the Frozen Lake environment.}
    \label{fig:expeFrozenLake}
\end{figure}

  We describe our tabular experiments: first in a dynamic programming context
  i.e. knowing the transition kernel $P$ and the reward function $r$, then in the ``Frozen Lake'' environment
  by only observing empirical transitions.

  \paragraph{Distributional dynamic programming.}
  Figures \ref{fig:os_conv}-\ref{fig:hist} are obtained by exact dynamic programming in the MDP in Figure \ref{fig:mdp} with $\gamma=\frac{1}{2}$: in this specific case, all policies are optimal.
  As discussed in subsection \ref{subsec:instability}, this handcrafted example reveals the instability of classic CDRL in Figure \ref{fig:os_conv} contrary to our one-step approach: for both methods we consider $K=4$ categories
  with $(z_1,z_2,z_3,z_4) = (0, 1.9, 2.1, 10)$.
  Figure \ref{fig:hist} illustrates the space complexity issue of DistrRL without projection: while the number of atoms is multiplied
  by at most $|\mathcal{X}||\mathcal{A}|$ after each application of $\mathcal{T}^\pi$, our one-step operators produce
  distributions with (at most) as many atoms as there are states, i.e. two in this example.

  \paragraph{Frozen Lake.}
  We also consider the Frozen Lake environment from OpenAI Gym~\citep{brockman2016} with discount factor $\gamma=0.95$.
  It is characterized by 16 states $x_1,\dots,x_{16}$ and four actions $a_1,\dots,a_4$.
  Plus, this is a stochastic environment i.e. the transition probabilities $P(x'|x,a)$ are not all equal to either 0 or 1,
  which justifies a distributional approach.
  In Figure~\ref{fig:expeFrozenLake}, we plot the iterates $p_{t,k}(x,a)$ over $100000$ steps generated by our tabular Algorithm~\ref{alg:OS_tabular} with $K=3$ atoms and $(z_1,z_2,z_3)=(0, 10, 20)$, constant stepsize $\alpha=0.6$ and $\epsilon$-greedy exploration with $\epsilon$ exponentially decaying from $1$ to $0.25$. We average the results over $100$ seeds. Given a pair $(x,a)$, we observe the joint convergence of the three probabilities.
  As in CDRL, and because of the mean-preserving property (Eq. \ref{eq:mean_preserving}), the average
  $Q_t(x,a) = \sum_{k=1}^3 p_{t,k}(x,a) z_k$ coincides with the Q-learning iterates and converges to $Q^*$.

\subsection{Atari games}
  For the experiments on Atari games~\citep{bellemare2013arcade}, we implement
  the \textsc{OS-C51} agent on top of the ``cleanrl'' codebase~\citep{huang2022cleanrl}.
  We compare the \textsc{OS-C51} algorithm against \textsc{C51} for two different number of atoms: $K=4$ or $K=51$.
  In each of the two cases, we choose the support to be evenly spread over the interval $[-10,10]$: $z_1=-10 < \dots < z_{K}=10$ .
  We use the same architecture as C51: a deep neural network, parameterized by $\theta$, takes an observation as input and outputs a vector of $K$ logits.
  For the optimization of \textsc{OS-C51}, we use the Adam optimizer~\citep{kingma2014adam} with learning rate set to $0.00025$ and batch size equal to $32$.
  As shown in Figure~\ref{fig:atari}, the performance of OS-C51 is comparable with
  C51 on the Beamrider, Breakout and Pong games. The results are averaged over 3 seeds.

\begin{figure*}[t!]
    \centering
    \subcaptionbox{Beamrider \label{fig:Beamrider}}{%
        \includegraphics[height=1.69in]{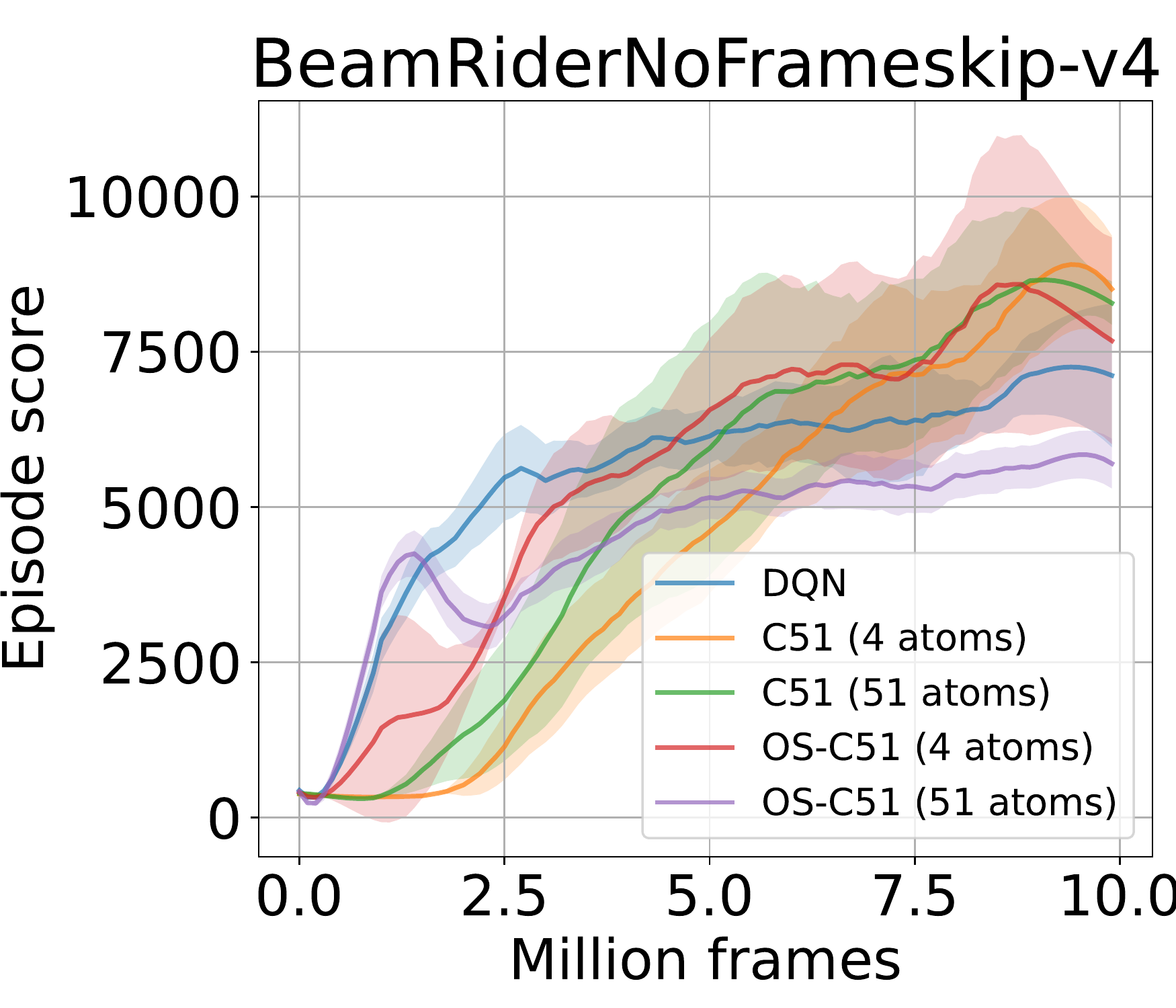}}
    \subcaptionbox{Breakout \label{fig:Breakout}}{%
        \includegraphics[height=1.69in]{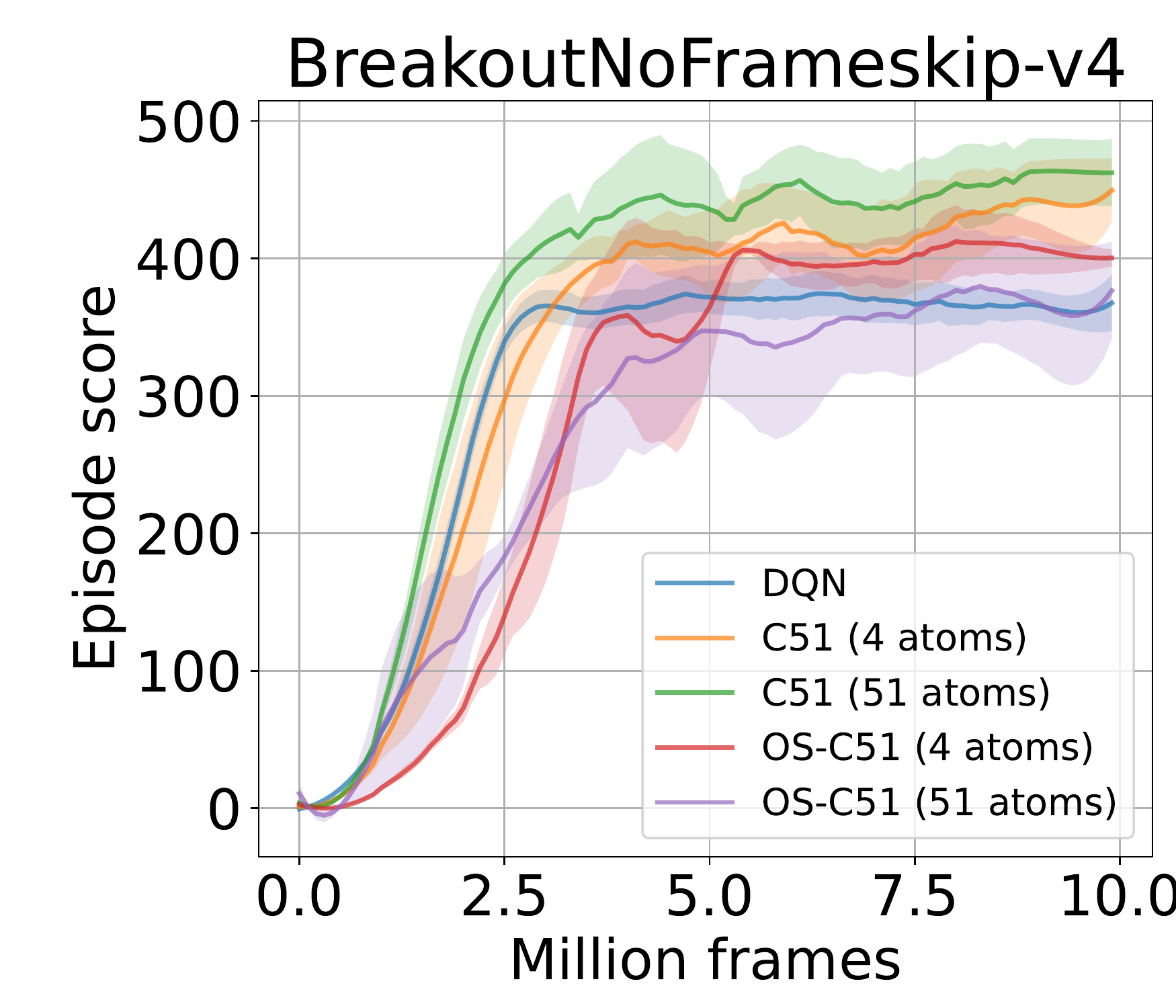}}
    \subcaptionbox{Pong \label{fig:Pong}}{%
        \includegraphics[height=1.69in]{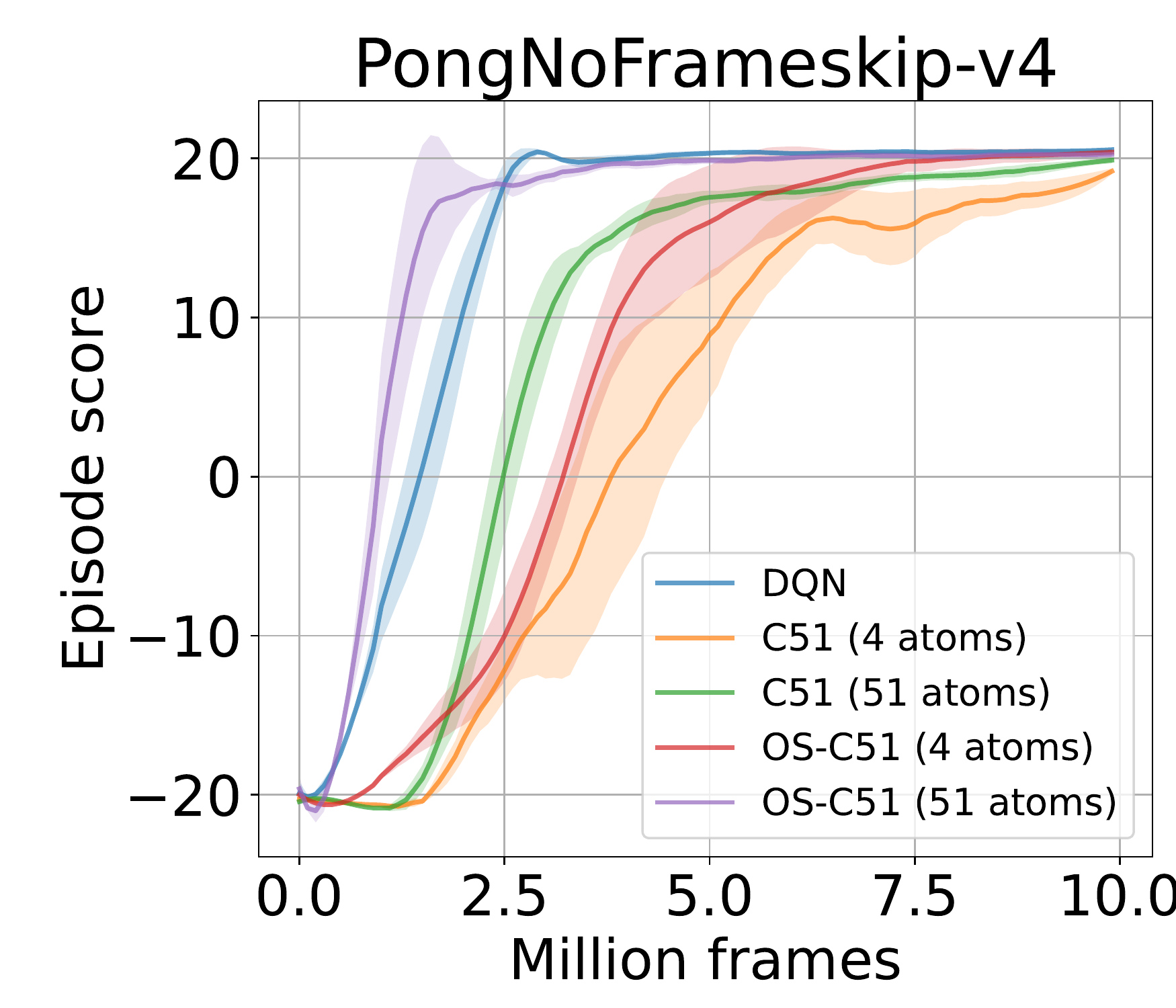}}
    \caption{Average evaluation episodic return over 10 million frames for three Atari games.}
    \label{fig:atari}
\end{figure*}




\section{Conclusion}
\label{sec:conclusions}
  We proposed new distributional RL algorithms that naturally extend TD(0) and Q-learning in the tabular setting.
  The main novelty in our approach is the use of new one-step distributional operators circumventing the instability issues of DistrRL control.
  We provided both theoretical convergence analysis and empirical proof-of-concept.
  Future research could investigate the generalization of our method based on the dynamics of several successive steps instead of a single one.

\bibliography{bib_tmlr}
\bibliographystyle{tmlr}

\appendix

\section{Proof of Proposition~\ref{prop:contraction}}
\label{app:contractivity}


  i). \emph{Contraction in Wasserstein distance.} Let $1\le p < \infty$.
  Let $\mu_1,\mu_2 \in \mathscr{P}_b(\mathbb{R})^{\mathcal{X}\times\mathcal{A}}$ be two distribution functions.
  Denoting $Q_1(x,a)$ and $Q_2(x,a)$ the respective expectations of $\mu_1^{(x,a)}$ and $\mu_2^{(x,a)}$, we have for any pair $(x,a)$:
  \begin{multline*}
    W_p^p( ( \mathscr{T} \mu_1 )^{(x,a)} , ( \mathscr{T} \mu_2 )^{(x,a)} )
    = \\
    W_p^p( \sum_{x'} P(x'|x,a) \delta_{r(x,a,x') + \gamma \max_{a'} Q_1(x',a')} , \sum_{x'} P(x'|x,a) \delta_{r(x,a,x') + \gamma \max_{a'} Q_2(x',a')} ) \\
    \le \sum_{x'} P(x'|x,a) W_p^p( \delta_{r(x,a,x') + \gamma \max_{a'} Q_1(x',a')} , \delta_{r(x,a,x') + \gamma \max_{a'} Q_2(x',a')} ) \\
    = \gamma^p \sum_{x'} P(x'|x,a) | \max_{a'} Q_1(x',a') - \max_{a'} Q_2(x',a') |^p  \\
    \le \gamma^p \sum_{x'} P(x'|x,a) \max_{a'} | Q_1(x',a') - Q_2(x',a') |^p
    = \gamma^p \sum_{x'} P(x'|x,a) \max_{a'} \Bigl| \int_{\tau=0}^1 ( F_{x',a'}^{-1}(\tau) - \tilde{F}_{x',a'}^{-1}(\tau) ) d\tau \Bigr|^p \\
    \le \gamma^p \sum_{x'} P(x'|x,a) \max_{a'} \underbrace{ \int_{\tau=0}^1 \bigl|  F_{x',a'}^{-1}(\tau) - \tilde{F}_{x',a'}^{-1}(\tau) \bigr|^p d\tau }_{ W_p^p( \mu_1^{(x',a')} , \mu_2^{(x',a')} ) }
    \le \gamma^p \overline{W}_p^p(\mu_1,\mu_2) \, ,
  \end{multline*}
  where the first inequality follows from the interpretation of the Wasserstein distance as an infimum over couplings, and $F_{x,a},\tilde{F}_{x,a}$ denote the respective CDFs of $\mu_1^{(x,a)}$ and $\mu_2^{(x,a)}$. Hence, by taking the supremum over all $(x,a)$, we deduce that $\mathscr{T}$ is a $\gamma$-contraction in $\overline{W}_p$:
  \begin{equation*}
      \overline{W}_p( \mathscr{T} \mu_1 , \mathscr{T} \mu_2 ) \le \gamma \overline{W}_p(\mu_1,\mu_2) .
  \end{equation*}

  Similarly for $p=\infty$,
  \begin{multline*}
    W_\infty( ( \mathscr{T} \mu_1 )^{(x,a)} , ( \mathscr{T} \mu_2 )^{(x,a)} )
    = \\
    W_\infty( \sum_{x'} P(x'|x,a) \delta_{r(x,a,x') + \gamma \max_{a'} Q_1(x',a')} , \sum_{x'} P(x'|x,a) \delta_{r(x,a,x') + \gamma \max_{a'} Q_2(x',a')} ) \\
    \le \sum_{x'} P(x'|x,a) W_\infty( \delta_{r(x,a,x') + \gamma \max_{a'} Q_1(x',a')} , \delta_{r(x,a,x') + \gamma \max_{a'} Q_2(x',a')} ) \\
    = \gamma \sum_{x'} P(x'|x,a) | \max_{a'} Q_1(x',a') - \max_{a'} Q_2(x',a') |  \\
    \le \gamma \sum_{x'} P(x'|x,a) \max_{a'} | Q_1(x',a') - Q_2(x',a') |
    = \gamma \sum_{x'} P(x'|x,a) \max_{a'} \Bigl| \int_{\tau=0}^1 ( F_{x',a'}^{-1}(\tau) - \tilde{F}_{x',a'}^{-1}(\tau) ) d\tau \Bigr| \\
    \le \gamma \sum_{x'} P(x'|x,a) \max_{a'} \int_{\tau=0}^1 \bigl|  F_{x',a'}^{-1}(\tau) - \tilde{F}_{x',a'}^{-1}(\tau) \bigr| d\tau
    \le \gamma \overline{W}_\infty(\mu_1,\mu_2) \, .
  \end{multline*}

  For policy evaluation, one can follow the same proofs by incorporating the following additional step:
  \begin{equation*}
    \Bigl| \sum_{a'} \pi(a'|x')( Q_1(x',a') - Q_2(x',a') ) \Bigr| \le \sum_{a'} \pi(a'|x') \bigl| Q_1(x',a') - Q_2(x',a') \bigr|
    \le \max_{a'} \bigl| Q_1(x',a') - Q_2(x',a') \bigr| \ .
  \end{equation*}

  ii). Let us write:
  \begin{multline*}
    W_1( ( \Pi_{\mathcal{C}} \mathscr{T} \mu_1 )^{(x,a)} , ( \Pi_{\mathcal{C}} \mathscr{T} \mu_2 )^{(x,a)} )
    = \\
    W_1( \sum_{x'} P(x'|x,a) \Pi_{\mathcal{C}} \delta_{r(x,a,x') + \gamma \max_{a'} Q_1(x',a')} , \sum_{x'} P(x'|x,a) \Pi_{\mathcal{C}} \delta_{r(x,a,x') + \gamma \max_{a'} Q_2(x',a')} ) \\
    \le \sum_{x'} P(x'|x,a) W_1( \Pi_{\mathcal{C}} \delta_{r(x,a,x') + \gamma \max_{a'} Q_1(x',a')} , \Pi_{\mathcal{C}} \delta_{r(x,a,x') + \gamma \max_{a'} Q_2(x',a')} ) \\
    \le \gamma \sum_{x'} P(x'|x,a) | \max_{a'} Q_1(x',a') - \max_{a'} Q_2(x',a') |  \\
    \le \gamma \sum_{x'} P(x'|x,a) \max_{a'} | Q_1(x',a') - Q_2(x',a') |
    = \gamma \sum_{x'} P(x'|x,a) \max_{a'} \Bigl| \int_{\tau=0}^1 ( F_{x',a'}^{-1}(\tau) - \tilde{F}_{x',a'}^{-1}(\tau) ) d\tau \Bigr| \\
    \le \gamma \sum_{x'} P(x'|x,a) \max_{a'} \underbrace{ \int_{\tau=0}^1 \bigl|  F_{x',a'}^{-1}(\tau) - \tilde{F}_{x',a'}^{-1}(\tau) \bigr| d\tau }_{ W_1( \mu_1^{(x',a')} , \mu_2^{(x',a')} ) }
    \le \gamma \overline{W}_1(\mu_1,\mu_2) \, ,
  \end{multline*}
  where the second inequality follows from Lemma~\ref{lem:CramerW1}.
  Taking the supremum over all $(x,a)$ concludes the proof.
  The proof for $\Pi_{\mathcal{C}} \mathscr{T}^\pi$ is similar.

  \begin{lemma}
    \label{lem:CramerW1}
    Let $a, b$ be two real numbers. Then,
    \begin{equation*}
      W_1( \Pi_{\mathcal{C}} \delta_a , \Pi_{\mathcal{C}} \delta_b ) \le | a - b | .
    \end{equation*}
  \end{lemma}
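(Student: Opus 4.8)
The plan is to exploit the one-dimensional representation $W_1(\nu_1,\nu_2)=\int_{\mathbb{R}}|F_1(x)-F_2(x)|\,dx$ of the $1$-Wasserstein distance in terms of CDFs, and to reduce the whole statement to the elementary fact that the mean of $\Pi_{\mathcal{C}}(\delta_{z})$ is a $1$-Lipschitz function of $z$. Without loss of generality I would assume $a\le b$, and write $G_z$ for the CDF of $\Pi_{\mathcal{C}}(\delta_z)$.

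First I would establish that the family $(\Pi_{\mathcal{C}}(\delta_z))_z$ is \emph{stochastically increasing} in $z$, i.e. $G_a(x)\ge G_b(x)$ for every $x\in\mathbb{R}$ whenever $a\le b$. This follows by inspecting the explicit CDF coming from Definition~\ref{def:Cramer}: fixing $x$ with $z_j\le x<z_{j+1}$, one checks that $G_z(x)$ equals $1$ while $z\le z_j$, decreases continuously from $1$ to $0$ as $z$ ranges over $(z_j,z_{j+1}]$, and stays $0$ for $z>z_{j+1}$, with the obvious degenerate behaviour for $x<z_1$ and $x\ge z_K$. Hence $G_z(x)$ is nonincreasing in $z$ for each fixed $x$, which is exactly the claimed ordering of the CDFs.

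Given this ordering, $|G_a(x)-G_b(x)|=G_a(x)-G_b(x)$ pointwise, so $W_1(\Pi_{\mathcal{C}}\delta_a,\Pi_{\mathcal{C}}\delta_b)=\int_{\mathbb{R}}(G_a(x)-G_b(x))\,dx=m(b)-m(a)$, where $m(z):=\mathbb{E}_{Z\sim\Pi_{\mathcal{C}}(\delta_z)}[Z]$ and I invoke the standard identity equating the integral of the difference of two CDFs with the difference of their means. It then remains only to identify $m$: the mean-preserving property recalled in~(\ref{eq:mean_preserving}) gives $m(z)=z$ for $z\in[z_1,z_K]$, while clamping yields $m(z)=z_1$ for $z\le z_1$ and $m(z)=z_K$ for $z\ge z_K$; that is, $m(z)=\max(z_1,\min(z,z_K))$. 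This clamp map is $1$-Lipschitz, so $W_1(\Pi_{\mathcal{C}}\delta_a,\Pi_{\mathcal{C}}\delta_b)=|m(a)-m(b)|\le|a-b|$, as desired.

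The only delicate point is the stochastic-monotonicity step, and in particular the behaviour at the boundary grid points $z_1$ and $z_K$: the clamping is precisely what turns the in-range equality $m(z)=z$ into the inequality of the lemma, so one must treat the cases where $a,b<z_1$, where $a,b>z_K$, and where $a,b$ straddle several cells uniformly. An alternative that sidesteps dominance is a telescoping triangle inequality through the grid points lying between $a$ and $b$, using that within a single cell $[z_j,z_{j+1}]$ the $W_1$ distance between the two projected Diracs equals the distance between their means; but the mean-plus-dominance route above seems cleanest, since it absorbs the clamping automatically.
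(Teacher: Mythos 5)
Your proof is correct, and it takes a genuinely different route from the paper's. The paper proceeds by exhaustive case analysis: it distinguishes whether $a$ and $b$ lie inside or outside $(z_1,z_K]$ and whether one of them sits on a grid point, computes $W_1$ explicitly between the resulting one- or two-atom distributions in each case, and reuses the grid-point cases to handle the general in-range case via a convexity/triangle-inequality step. You instead factor the statement through two structural facts: (a) $z\mapsto \Pi_{\mathcal{C}}(\delta_z)$ is monotone for stochastic dominance (a property the paper itself invokes elsewhere, in the proof of Lemma~\ref{lem:convLU}, citing Proposition 5 of \citet{rowland2018analysis}), and (b) the mean of $\Pi_{\mathcal{C}}(\delta_z)$ is the clamp $\max(z_1,\min(z,z_K))$, by the mean-preserving property~(\ref{eq:mean_preserving}) together with the boundary behaviour of the projection. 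Combined with the CDF representation of $W_1$ and the identity $\int_{\mathbb{R}}(G_a-G_b)=m(b)-m(a)$ valid under dominance, this yields the exact value $W_1(\Pi_{\mathcal{C}}\delta_a,\Pi_{\mathcal{C}}\delta_b)=|m(a)-m(b)|$, which is strictly more information than the paper's inequality (in particular it shows equality whenever $a,b\in[z_1,z_K]$). Your monotonicity step is verified correctly, including at cell boundaries and outside the grid, so there is no gap; what your approach buys is concision and a conceptual explanation of where the contraction comes from, while the paper's case analysis buys explicit closed-form expressions for the projected $W_1$ distances that it partially recycles across cases.
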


  \begin{proof} We proceed by exhaustion of all possible (redundant) cases, up to permuting $a$ and $b$:
    \begin{enumerate}[(i)]
      \item $a,b$ are both outside the interval $(z_1,z_K]$: the proof is trivial in this case
      \item $a\in (z_1,z_K]$, $b\in \{z_1,\dots,z_K\} $ and $a\le b$
      \item $a\in (z_1,z_K]$, $b\in \{z_1,\dots,z_K\} $ and $a > b$
      \item $a\in (z_1,z_K]$ and $b\notin (z_1,z_K] $
      \item $a,b$ are both inside the interval $(z_1,z_K]$
    \end{enumerate}

    Proof in case ii). We assume that $b$ belongs to the support, $b\ge a$ and $a$ lies inside the interval $(z_1,z_K]$:
     \begin{equation*}
        z_j < a \le z_{j+1} \quad \text{ and } \quad b = z_{k+1} \quad \text{ with } \quad 1\le j\le k\le K \ .
      \end{equation*}
     Then, we have:
     \begin{multline*}
       W_1( \Pi_{\mathcal{C}} \delta_a , \Pi_{\mathcal{C}} \delta_b )
       = \frac{z_{j+1}-a}{z_{j+1}-z_j} \bigl( z_{k+1} - z_j \bigr) + \frac{a-z_j}{z_{j+1}-z_j} \bigl( z_{k+1} - z_{j+1} \bigr) \\
       = ( z_{k+1} - z_j ) \frac{z_{j+1}-a + a-z_j}{z_{j+1}-z_j} - (z_{j+1}-z_j) \frac{a-z_j}{z_{j+1}-z_j}  \\
       = z_{k+1} - z_j - (a - z_j) = z_{k+1} - a = | a - b | .
     \end{multline*}

     Proof in case iii). Similar to (ii) except that we have $b=z_k$ with $j \ge k$:
     \begin{multline*}
       W_1( \Pi_{\mathcal{C}} \delta_a , \Pi_{\mathcal{C}} \delta_b )
       = \frac{z_{j+1}-a}{z_{j+1}-z_j} \bigl( z_j - z_k \bigr) + \frac{a-z_j}{z_{j+1}-z_j} \bigl( z_{j+1} - z_k \bigr) \\
       = ( z_j - z_k ) \frac{z_{j+1}-a + a-z_j}{z_{j+1}-z_j} + (z_{j+1}-z_j) \frac{a-z_j}{z_{j+1}-z_j}  \\
       = z_j - z_k + a - z_j = a - z_k = | a - b | .
     \end{multline*}

     Proof in case iv). Now, if $b$ is outside the interval $(z_1,z_k]$, say $b\le z_1$, we deduce from the previous computation that
     $W_1( \Pi_{\mathcal{C}} \delta_a , \Pi_{\mathcal{C}} \delta_b ) = a - z_1 \le |a - b|$.

     Proof in case v).
     Finally, if both $a$ and $b$ lie in $(z_1, z_K]$,
     \begin{multline}
       \label{eq:W1}
       W_1( \Pi_{\mathcal{C}} \delta_a , \Pi_{\mathcal{C}} \delta_b )
       = W_1( \frac{z_{j+1}-a}{z_{j+1}-z_j} \delta_{z_j} + \frac{a-z_j}{z_{j+1}-z_j} \delta_{z_{j+1}} , \Pi_{\mathcal{C}} \delta_b )\\
       \le \frac{z_{j+1}-a}{z_{j+1}-z_j} W_1( \delta_{z_j} , \Pi_{\mathcal{C}} \delta_b )
       + \frac{a-z_j}{z_{j+1}-z_j} W_1( \delta_{z_{j+1}} , \Pi_{\mathcal{C}} \delta_b ) \\
       = \frac{z_{j+1}-a}{z_{j+1}-z_j} \bigl| b - z_j \bigr| + \frac{a-z_j}{z_{j+1}-z_j} \bigl| b - z_{j+1} \bigr| ,
     \end{multline}
     where we used cases (ii) and (iii) in the last equality.
     Then, if $b\ge z_{j+1}$, Eq.~\eqref{eq:W1} implies
     \begin{multline*}
       W_1( \Pi_{\mathcal{C}} \delta_a , \Pi_{\mathcal{C}} \delta_b )
       \le \frac{z_{j+1}-a}{z_{j+1}-z_j} ( b - z_j ) + \frac{a-z_j}{z_{j+1}-z_j} ( b - z_{j+1} ) \\
       = ( b - z_j ) \frac{z_{j+1}-a+a-z_j}{z_{j+1}-z_j} - ( z_{j+1} - z_j )  \frac{a-z_j}{z_{j+1}-z_j}
       = b - z_j - a + z_j = b-a .
     \end{multline*}
     Symmetrically, if $b\le z_j$, Eq.~\eqref{eq:W1} implies
     \begin{equation*}
       W_1( \Pi_{\mathcal{C}} \delta_a , \Pi_{\mathcal{C}} \delta_b )
       \le \frac{z_{j+1}-a}{z_{j+1}-z_j} ( z_j - b ) + \frac{a-z_j}{z_{j+1}-z_j} ( z_{j+1} - b )
       = a-b .
     \end{equation*}
     Lastly, if $a,b$ both belong to the same segment $(z_j , z_{j+1}]$, and say $a\le b$, we have by direct computation:
     \begin{equation*}
       W_1( \Pi_{\mathcal{C}} \delta_a , \Pi_{\mathcal{C}} \delta_b )
       = (z_{j+1}-z_j) \frac{ z_{j+1}-a - ( z_{j+1}-b ) }{ z_{j+1}-z_j } = b-a \ .
     \end{equation*}
  \end{proof}


\section{Proof of Proposition~\ref{prop:fixedpoint}}
\label{app:fixedpoints}

By combining Proposition~\ref{prop:contraction} with Banach's fixed point theorem, we deduce the existence and uniqueness of the fixed points.

  For $(i)$, we verify:
  \begin{multline*}
    (\mathscr{T} \nu_\pi)^{(x,a)} = \sum_{x'} P(x'|x,a) \delta_{r(x,a,x') + \gamma \sum_{a'} \pi(a'|x') \sum_{x''} P(x''|x',a') ( r(x',a',x'') + \gamma V^\pi(x'') ) } \\
    = \sum_{x'} P(x'|x,a) \delta_{r(x,a,x') + \gamma \sum_{a'} \pi(a'|x') Q^\pi(x',a') }
    = \nu_\pi^{(x,a)} \, .
  \end{multline*}

  ii). For the control case,
  \begin{multline*}
    (\mathscr{T} \nu_*)^{(x,a)} = \sum_{x'} P(x'|x,a) \delta_{r(x,a,x') + \gamma \max_{a'} \sum_{x''} P(x''|x',a') ( r(x',a',x'') + \gamma V^*(x'') ) } \\
    = \sum_{x'} P(x'|x,a) \delta_{r(x,a,x') + \gamma \max_{a'} Q^*(x',a') }
    = \nu_*^{(x,a)} \, .
  \end{multline*}
  iii). We start by computing:
  \begin{equation*}
    (\mathscr{T}^\pi \circ \Pi_{\mathcal{C}}\nu_\pi)^{(x,a)} = \sum_{x'} P(x'|x,a) \delta_{r(x,a,x') + \gamma \sum_{a'} \pi(a'|x') Q^\pi(x',a') }
    = \nu_\pi^{(x,a)} \, ,
  \end{equation*}
  where we used the \emph{mean-preserving property} of the Cram\'er projection (see Eq.~\eqref{eq:mean_preserving}). Then, we deduce that
  \begin{equation*}
    \Pi_{\mathcal{C}} \circ \mathscr{T}^\pi \circ \Pi_{\mathcal{C}}\nu_\pi = \Pi_{\mathcal{C}}\nu_\pi = \eta_\pi \ .
  \end{equation*}

  iv). Similarly, one can show that $\eta_*$ is the fixed point of $\Pi_{\mathcal{C}} \circ \mathscr{T}$.

\section{Proof of Theorem~\ref{th:convergence}}
\label{app:analysis}

We follow the same steps as in the proofs of Theorem 2 in \citep{tsitsiklis1994asynchronous} and Theorem 1 in \citep{rowland2018analysis}. We focus on the control case (ii), though the proof remains similar for policy evaluation.
Let us define for each $(x,a)\in\mathcal{X}\times\mathcal{A}$:
$L_0^{(x,a)} = \delta_{z_1}$, $U_0^{(x,a)} = \delta_{z_K}$, and for $j\ge 0$,
\begin{equation*}
  L_{j+1}^{(x,a)} = \frac{1}{2} L_j^{(x,a)} + \frac{1}{2}(\Pi_{\mathcal{C}}\mathscr{T} L_j)^{(x,a)} \quad
  \text{ and } \quad U_{j+1}^{(x,a)} = \frac{1}{2} U_j^{(x,a)} + \frac{1}{2}(\Pi_{\mathcal{C}}\mathscr{T} U_j)^{(x,a)} \ .
\end{equation*}

We now state the two following lemmas and then, before proving them, explain how they allow us to conclude.

\begin{lemma}
  \label{lem:convLU}
  \begin{enumerate}[(i)]
    \item In terms of entrywise stochastic dominance, it holds for all $j\ge 0$: $L_{j+1} \ge L_j$ and $U_{j+1} \le U_j$.
    \item $(L_j)$ and $(U_j)$ both converge to $\eta_*$ in $\overline{W}_1$.
  \end{enumerate}
\end{lemma}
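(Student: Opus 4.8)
The plan is to recast both claims in terms of the \emph{damped} operator $G := \frac{1}{2}\,\mathrm{Id} + \frac{1}{2}\,\Pi_{\mathcal{C}}\mathscr{T}$, so that $L_{j+1}=G L_j$ and $U_{j+1}=G U_j$ starting from the extremal collections $L_0=(\delta_{z_1})_{x,a}$ and $U_0=(\delta_{z_K})_{x,a}$. The two assertions then follow from two independent features of $G$: that it is a strict contraction in $\overline{W}_1$ (giving part (ii)) and that it is monotone for entrywise stochastic dominance (giving part (i)).

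For part (ii), I would first use that $W_1$ is convex with respect to mixtures, i.e. $W_1(\lambda\nu_1+(1-\lambda)\nu_1', \lambda\nu_2+(1-\lambda)\nu_2')\le \lambda W_1(\nu_1,\nu_2)+(1-\lambda)W_1(\nu_1',\nu_2')$. Combined with the $\gamma$-contractivity of $\Pi_{\mathcal{C}}\mathscr{T}$ from Proposition~\ref{prop:contraction}-(ii), this yields $\overline{W}_1(G\mu_1,G\mu_2)\le \tfrac{1}{2}\overline{W}_1(\mu_1,\mu_2)+\tfrac{\gamma}{2}\overline{W}_1(\mu_1,\mu_2)=\tfrac{1+\gamma}{2}\,\overline{W}_1(\mu_1,\mu_2)$, so $G$ is a $\tfrac{1+\gamma}{2}$-contraction with $\tfrac{1+\gamma}{2}<1$. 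Since $\eta$ is fixed by $G$ if and only if it is fixed by $\Pi_{\mathcal{C}}\mathscr{T}$, Banach's theorem together with Proposition~\ref{prop:fixedpoint}-(iv) identifies the unique fixed point as $\eta_*$; iterating gives $L_j=G^jL_0\to\eta_*$ and $U_j=G^jU_0\to\eta_*$ in $\overline{W}_1$.

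For part (i), the core step is to show that $\Pi_{\mathcal{C}}\mathscr{T}$ preserves stochastic dominance, which I would split across the two factors. The operator $\mathscr{T}$ depends on $\mu$ only through the means $Q(x,a)=\mathbb{E}_{Z\sim\mu^{(x,a)}}[Z]$; since dominance orders expectations, $\mu_1\le\mu_2$ forces $\max_{a'}Q_1(x',a')\le\max_{a'}Q_2(x',a')$, so each atom $r(x,a,x')+\gamma\max_{a'}Q_1(x',a')$ sits weakly to the left of its $\mu_2$-counterpart while sharing the weight $P(x'|x,a)$; a mixture all of whose atoms are shifted rightward is stochastically larger, hence $\mathscr{T}\mu_1\le\mathscr{T}\mu_2$. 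For the projection, I would note that $a\mapsto\Pi_{\mathcal{C}}\delta_a$ is non-decreasing for stochastic dominance (immediate from Definition~\ref{def:Cramer}) and extend to arbitrary $\nu_1\le\nu_2$ via a comonotone coupling $(X_1,X_2)$ with $X_1\le X_2$ almost surely, expressing each $\Pi_{\mathcal{C}}\nu_i$ as the mixture of $\Pi_{\mathcal{C}}\delta_{X_i}$. As mixtures preserve dominance, $G$ is monotone. The base cases follow from extremality: $\delta_{z_1}$ (resp. $\delta_{z_K}$) is minimal (resp. maximal) among distributions supported in $[z_1,z_K]$, and since $GL_0,GU_0$ are again supported in $[z_1,z_K]$ we obtain $L_0\le L_1$ and $U_1\le U_0$; an induction using monotonicity of $G$, namely $L_j\le L_{j+1}\Rightarrow L_{j+1}=GL_j\le GL_{j+1}=L_{j+2}$ and symmetrically for $U$, propagates the ordering to all $j$.

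The step I expect to be the main obstacle is the monotonicity of the Cram\'er projection under stochastic dominance: unlike $\mathscr{T}$, which factors conveniently through the mean and is therefore trivially order-preserving, $\Pi_{\mathcal{C}}$ acts on the whole distribution and its piecewise definition must be checked to respect the CDF ordering (recall the convention $\nu_2\le\nu_1\Leftrightarrow F_1\le F_2$). The comonotone-coupling reduction to the Dirac case is the clean route, but it is there that the direction of the inequalities and the clipping behavior of $\Pi_{\mathcal{C}}$ at the endpoints $z_1$ and $z_K$ demand the most care.
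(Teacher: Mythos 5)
Your proof is correct, and it splits into one part that mirrors the paper and one that takes a genuinely different route. For part (i) you and the paper do essentially the same thing: an induction driven by the monotonicity of $\Pi_{\mathcal{C}}\circ\mathscr{T}$ under entrywise stochastic dominance, with the base case handled by extremality of $\delta_{z_1}$ and $\delta_{z_K}$ on $\{z_1,\dots,z_K\}$; the only difference is that the paper imports the monotonicity of $\Pi_{\mathcal{C}}$ from Proposition 5 of \citet{rowland2018analysis}, whereas you reprove it via the Dirac case plus a comonotone coupling -- the step you rightly flag as the delicate one, and your case analysis of $a\mapsto\Pi_{\mathcal{C}}\delta_a$ does go through, including the clipping at the endpoints. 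For part (ii) the paper does \emph{not} use contractivity of the damped operator $G=\tfrac12\mathrm{Id}+\tfrac12\Pi_{\mathcal{C}}\mathscr{T}$: it instead feeds part (i) into an auxiliary result (its Lemma~\ref{lem:convW1}) showing that any stochastically monotone sequence of distributions on the finite support $\{z_1,\dots,z_K\}$ converges in $W_1$, because the quantile functions are monotone and discrete-valued hence eventually constant, and then identifies the limit as $\eta_*$ by continuity of $\Pi_{\mathcal{C}}\mathscr{T}$ and the same observation you use, namely that fixed points of $G$ and of $\Pi_{\mathcal{C}}\mathscr{T}$ coincide. Your route -- convexity of $W_1$ under mixtures plus Proposition~\ref{prop:contraction}-(ii) making $G$ a $\tfrac{1+\gamma}{2}$-contraction in $\overline{W}_1$, then Banach on the (complete) set of collections supported on $\{z_1,\dots,z_K\}$ -- is shorter, yields an explicit geometric rate, and makes (ii) logically independent of (i); the paper's route needs (i) as an input to (ii) but relies only on monotonicity and finiteness of the support rather than on the contraction constant, which is why it parallels the structure of the corresponding argument in \citet{rowland2018analysis}. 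Both arguments reach $\eta_*$ under the same implicit hypothesis of Proposition~\ref{prop:fixedpoint}-(iv) identifying the fixed point of $\Pi_{\mathcal{C}}\circ\mathscr{T}$ with $\Pi_{\mathcal{C}}(\nu_*)$.
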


\begin{lemma}
  \label{lem:sandwich}
  Given $j\ge 0$, there exists a random time $T_j \ge 0$ such that
  \begin{equation*}
    L_j \le \eta_t \le U_j \quad \text{for all } t\ge T_j \, , \quad \text{almost surely}.
  \end{equation*}
\end{lemma}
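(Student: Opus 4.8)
The plan is to prove the sandwich bound by induction on $j$, leaning on two facts: the projected one-step operator is monotone with respect to entrywise stochastic dominance, and the iterates can be coupled with a ``frozen-input'' stochastic approximation whose target has a constant conditional mean. For the base case $j=0$, I would note that every iterate $\eta_t^{(x,a)}$ of Algorithm~\ref{alg:OS_tabular} is a categorical distribution on $\{z_1,\dots,z_K\}$, since the Cram\'er projection returns such distributions and the mixture update preserves the support; any distribution supported on $[z_1,z_K]$ obeys $\delta_{z_1}\le \eta_t^{(x,a)}\le \delta_{z_K}$, so the claim holds with $T_0=0$.

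For the inductive step I assume a random time $T_j$ with $L_j\le \eta_t\le U_j$ for all $t\ge T_j$ and establish the lower bound (the upper one being symmetric). I freeze $Q_{L_j}(x',a')=\mathbb{E}_{Z\sim L_j^{(x',a')}}[Z]$ and introduce an auxiliary collection $\underline{\eta}_t$, started at $\underline{\eta}_{T_j}=L_j$ and updated along the same sampled transitions and stepsizes as the algorithm but with the frozen target $\Pi_{\mathcal{C}}\,\delta_{r_t+\gamma\max_{a'}Q_{L_j}(x_{t+1},a')}$ replacing the true one. First, $\eta_t\ge\underline{\eta}_t$ for all $t\ge T_j$: arguing by induction on $t$, the mixture update preserves dominance, and the true target dominates the frozen one because $\eta_t\ge L_j$ forces $\mathbb{E}_{Z\sim\eta_t^{(x',a')}}[Z]\ge Q_{L_j}(x',a')$, so $\max_{a'}$ of the former dominates that of the latter and the Cram\'er projection of a Dirac is monotone in its location. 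Second, since $Q_{L_j}$ is independent of $t$, the conditional mean of the frozen target given $(x_t,a_t)=(x,a)$ is $\sum_{x'}P(x'|x,a)\,\Pi_{\mathcal{C}}\,\delta_{r(x,a,x')+\gamma\max_{a'}Q_{L_j}(x',a')}=(\Pi_{\mathcal{C}}\mathscr{T} L_j)^{(x,a)}$ by linearity of $\Pi_{\mathcal{C}}$; representing distributions by their CDF vectors on $z_1,\dots,z_K$, the recursion for $\underline{\eta}_t$ is then a standard Robbins-Monro averaging toward the fixed limit $\Pi_{\mathcal{C}}\mathscr{T} L_j$, which converges almost surely under Assumption~\ref{ass:RobbinsMonro}.

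To conclude, I split over the finitely many support points, writing $F$ for CDFs so that $\eta_t\ge L_{j+1}$ means $F_{\eta_t}\le F_{L_{j+1}}$ pointwise. Because $L_{j+1}=\tfrac12 L_j+\tfrac12\Pi_{\mathcal{C}}\mathscr{T} L_j$ and, by Lemma~\ref{lem:convLU}(i), $L_j\le\Pi_{\mathcal{C}}\mathscr{T} L_j$, one has $F_{L_{j+1}}-F_{\Pi_{\mathcal{C}}\mathscr{T} L_j}=\tfrac12(F_{L_j}-F_{\Pi_{\mathcal{C}}\mathscr{T} L_j})\ge0$ at each $z_k$. Where this gap is strict, the convergence $F_{\underline{\eta}_t}(z_k)\to F_{\Pi_{\mathcal{C}}\mathscr{T} L_j}(z_k)<F_{L_{j+1}}(z_k)$ gives $F_{\eta_t}(z_k)\le F_{\underline{\eta}_t}(z_k)\le F_{L_{j+1}}(z_k)$ for $t$ large; where the gap vanishes, $F_{L_{j+1}}(z_k)=F_{L_j}(z_k)$ and the induction hypothesis $\eta_t\ge L_j$ already gives $F_{\eta_t}(z_k)\le F_{L_j}(z_k)=F_{L_{j+1}}(z_k)$. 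Taking the maximum of the finitely many resulting thresholds over the $K$ support points and the finitely many pairs $(x,a)$ yields a single $T_{j+1}$ with $\eta_t\ge L_{j+1}$ for $t\ge T_{j+1}$, and the symmetric argument gives $\eta_t\le U_{j+1}$.

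The main obstacle I anticipate is the almost-sure convergence of the frozen auxiliary sequence: one must check that its bounded targets deviate from their conditional mean by martingale differences and invoke the asynchronous Robbins-Monro result of \citep{tsitsiklis1994asynchronous}, which is where Assumption~\ref{ass:RobbinsMonro} and the infinitely-often visitation of each $(x,a)$ enter. The second delicate point is the split over support points: convergence of $\underline{\eta}_t$ alone cannot force $\eta_t\ge L_{j+1}$ where the CDFs of $L_j$ and $\Pi_{\mathcal{C}}\mathscr{T} L_j$ coincide, and it is exactly the slack built into the convex combination defining $L_{j+1}$, combined with the induction hypothesis, that closes that case.
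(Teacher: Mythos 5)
Your proof is correct and follows essentially the same route as the paper's: induction on $j$, a frozen-target auxiliary recursion whose conditional mean is $\Pi_{\mathcal{C}}\mathscr{T}L_j$ (resp.\ $\Pi_{\mathcal{C}}\mathscr{T}U_j$), almost-sure Robbins--Monro convergence of that recursion under Assumption~\ref{ass:RobbinsMonro}, and a case split over the support points according to whether the CDF gap between $L_j$ and $\Pi_{\mathcal{C}}\mathscr{T}L_j$ vanishes (where it does, falling back on the induction hypothesis $\eta_t\ge L_j$). The only difference is presentational: the paper dominates $\eta_t$ by a sum $H_t+W_t$ of an explicit averaging term and a signed-measure noise term controlled by the minimal nonzero gap $\Delta/4$, whereas you couple $\eta_t$ monotonically with a single auxiliary stochastic process $\underline{\eta}_t$ and pass to the limit pointwise at each $z_k$ --- the same argument in slightly different packaging.
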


From Lemma \ref{lem:convLU}-(ii), let $\epsilon > 0$ and take $j\ge 0$ large enough such that
\begin{equation*}
  \max\{ \overline{W}_1( L_j, \eta_* ), \overline{W}_1( U_j, \eta_* ) \} < \epsilon \ .
\end{equation*}
Then, it follows from Lemma \ref{lem:sandwich} followed by a triangular inequality that
\begin{equation*}
  \overline{W}_1( \eta_t, L_j ) \le \overline{W}_1( L_j, U_j ) \le \overline{W}_1( L_j, \eta_* ) + \overline{W}_1( U_j, \eta_* ) < 2\epsilon \, ,
\end{equation*}
for all $t\ge T_j$, almost surely. Finally,

\begin{equation*}
  \overline{W}_1( \eta_t, \eta_* ) \le \overline{W}_1( \eta_t, L_j ) + \overline{W}_1( L_j, U_j ) + \overline{W}_1( U_j, \eta_* ) < 5\epsilon \, ,
\end{equation*}
which implies Theorem \ref{th:convergence}.
We still need to prove Lemmas \ref{lem:convLU}-\ref{lem:sandwich}.

\subsection{Proof of Lemma \ref{lem:convLU}}

(i). Let us show by induction that $U_{j+1} \le U_j$.
First, the base case $U_1^{(x,a)} \le U_0^{(x,a)}=\delta_{z_K}$ is true because
$U_1^{(x,a)}$ is supported in $\{z_1,\dots,z_K\}$.
Now, let us assume that $U_{j+1} \le U_j$ for some $j\ge 0$.
We recall from Proposition 5 in \citep{rowland2018analysis} that $\Pi_{\mathcal{C}}$ is a monotone map for element-wise
stochastic dominance. Plus, it is easy to see that both $\mathscr{T}^\pi$ and $\mathscr{T}$ are monotone too.
Then, by monotonicity of $\Pi_{\mathcal{C}}\mathscr{T}$, it holds that $\Pi_{\mathcal{C}}\mathscr{T} U_{j+1} \le \Pi_{\mathcal{C}}\mathscr{T} U_j$
and hence,
\begin{equation*}
  U_{j+2}^{(x,a)} = \frac{1}{2} U_{j+1}^{(x,a)} + \frac{1}{2}(\Pi_{\mathcal{C}}\mathscr{T} U_{j+1})^{(x,a)}
  \le \frac{1}{2} U_j^{(x,a)} + \frac{1}{2}(\Pi_{\mathcal{C}}\mathscr{T} U_j)^{(x,a)} = U_{j+1}^{(x,a)} \, ,
\end{equation*}
which proves the induction step.
Symmetrically, one can show that $L_{j+1} \ge L_j$.

\noindent (ii).
Similarly to Lemma 7 in \citep{rowland2018analysis}, we formulate the following general result.

\begin{lemma}
  \label{lem:convW1}
  Let $(\nu_k)_{k = 0}^\infty$ be a sequence of probability distributions over $\{z_1,\dots,z_K\}$
  such that $\nu_{k+1}\le \nu_k$ for all $k\ge 0$.
  Then, there exists a limit probability distribution $\nu_{\lim}$ over $\{z_1,\dots,z_K\}$ such that
  $\nu_k \to \nu_{\lim}$ in $W_1$.
\end{lemma}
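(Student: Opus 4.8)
The plan is to exploit the finiteness of the support, which reduces the statement to an elementary fact about monotone bounded real sequences. First I would translate the hypothesis $\nu_{k+1} \le \nu_k$ into a statement about cumulative distribution functions. Writing $F_k$ for the CDF of $\nu_k$, the definition of stochastic dominance recalled in the Notations says that $\nu_{k+1} \le \nu_k$ is equivalent to $F_k(z) \le F_{k+1}(z)$ for every $z \in \mathbb{R}$. Hence, for each fixed $z$, the real sequence $(F_k(z))_{k \ge 0}$ is non-decreasing and bounded above by $1$, so it converges to some limit $F_{\lim}(z) \in [0,1]$.

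Next I would check that $F_{\lim}$ is the CDF of a genuine probability distribution $\nu_{\lim}$ supported on $\{z_1, \dots, z_K\}$. Since each $F_k$ is the CDF of a distribution on this finite set, it is a right-continuous non-decreasing step function with $F_k(z) = 0$ for $z < z_1$ and $F_k(z_K) = 1$; all of these properties pass to the pointwise limit $F_{\lim}$, which is therefore itself the CDF of a distribution on $\{z_1, \dots, z_K\}$, whose masses are recovered by $p_{\lim}(z_i) = F_{\lim}(z_i) - F_{\lim}(z_{i-1})$ (with the convention $F_{\lim}(z_0) = 0$). This defines $\nu_{\lim}$.

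Finally I would upgrade pointwise CDF convergence to convergence in $W_1$. Using the standard one-dimensional identity $W_1(\nu_k, \nu_{\lim}) = \int_{\mathbb{R}} |F_k(z) - F_{\lim}(z)| \, dz$ together with the fact that both CDFs are constant on each interval $[z_i, z_{i+1})$, the integral collapses to the finite sum
\[
  W_1(\nu_k, \nu_{\lim}) = \sum_{i=1}^{K-1} (z_{i+1} - z_i) \, |F_k(z_i) - F_{\lim}(z_i)| .
\]
Since this is a finite sum of terms each tending to $0$ by the previous step, the whole expression tends to $0$, which proves $\nu_k \to \nu_{\lim}$ in $W_1$.

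Because the support is finite, there is essentially no analytic obstacle here: the only point requiring care is getting the direction of stochastic dominance right, so that the CDF values are indeed monotone increasing in $k$ (hence convergent), and noting that finiteness of the support is precisely what lets pointwise CDF convergence imply $W_1$ convergence without any further uniform-integrability argument.
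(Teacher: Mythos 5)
Your proof is correct, but it takes a genuinely different route from the paper's. The paper works on the quantile side: it observes that $F_k^{-1}(\tau)$ is non-increasing in $k$ and valued in the finite set $\{z_1,\dots,z_K\}$, hence eventually constant for each $\tau$, and then controls $\int_0^1 |F_k^{-1}(\tau)-H_{\lim}(\tau)|\,d\tau$ by splitting $[0,1]$ according to the jump points of the limiting quantile function; this requires an explicit $\epsilon$-argument near those jump points (the quantities $\Delta_p$ and $\Delta_z$ in the paper) because the partition of $[0,1]$ is determined by the \emph{limiting} probabilities while $F_k^{-1}$ may still disagree on small neighbourhoods of the cut points. You instead work on the CDF side: monotone bounded convergence of $F_k(z)$ for each $z$, identification of the limit as a CDF on the finite support, and then the one-dimensional identity $W_1(\nu_k,\nu_{\lim})=\int_{\mathbb{R}}|F_k-F_{\lim}|\,dz$, which collapses to the finite sum $\sum_{i=1}^{K-1}(z_{i+1}-z_i)\,|F_k(z_i)-F_{\lim}(z_i)|$ because both CDFs are constant between consecutive support points. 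This is cleaner: the partition of the integration domain is fixed by the support $\{z_1,\dots,z_K\}$ rather than by the limit distribution, so no uniformity argument is needed at all. The only ingredient you use that the paper does not state explicitly is the equality of $\int_0^1|F^{-1}-G^{-1}|\,d\tau$ (the paper's definition of $W_1$) with $\int_{\mathbb{R}}|F-G|\,dz$; this is a standard fact and worth a one-line citation, but it is not a gap. Note also that, unlike the quantile values, the CDF values $F_k(z_i)$ need not be eventually constant (the probabilities can converge without stabilising), so your appeal to genuine limits of monotone bounded sequences, rather than eventual constancy, is the right move here.
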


\begin{proof}
  Let us denote $F_k$ the CDF of $\nu_k$ and $F_k^{-1}$ its quantile function valued in $\{z_1,\dots,z_K\}$. The assumption that $\nu_k$ stochastically dominates $\nu_{k+1}$ reformulates as $F_{k+1}^{-1}(\tau) \le F_k^{-1}(\tau)$ for all $0<\tau\le 1$.
  Hence for each $\tau\in(0,1]$, the sequence $F_k^{-1}(\tau)$ is non-increasing and lower bounded by $z_1$.
  Therefore, this sequence converges: $F_k^{-1}(\tau) \to H_{\lim}(\tau) $.
  As $F_k^{-1}(\tau)$ can only take discrete values, there exists $N(\tau)$ such that $\forall k\ge N(\tau)$, $F_k^{-1}(\tau)=H_{\lim}(\tau)$.
  The limit function $H_{\lim}$ is thus non-decreasing, valued in $\{z_1,\dots,z_K\}$ and right-continuous with left limits.
  It is therefore the quantile function of a probability distribution $\nu_{\lim}$ supported over $\{z_1,\dots,z_K\}$.
  Let us now show that $W_1(\nu_k, \nu_{\lim}) \to 0$.
  Denoting $F_{\lim}$ the CDF of $\nu_{\lim}$ and $p_0=0, p_1=F_{\lim}(z_1), \dots, p_K=F_{\lim}(z_K)=1$, we have:
  \begin{equation}
    \label{eq:pika}
    W_1(\nu_k, \nu_{\lim}) = \int_{\tau=0}^1 | F_k^{-1}(\tau) - H_{\lim}(\tau) | d\tau
    = \sum_{1\le k \le K: p_k \neq p_{k-1} } \int_{\tau=p_{k-1}}^{p_k} | F_k^{-1}(\tau) - z_k | d\tau \, .
  \end{equation}
  Let $\Delta_{p} = \min_{1\le k \le K : p_k \neq p_{k-1} } (p_k -p_{k-1} )$, $\Delta_{z} = z_{K} -z_1 $ and $0< \epsilon < \Delta_{p}/2$.
  Then for all $k \ge \max_{1\le k \le K: p_k \neq p_{k-1}} \max\{ N(p_{k-1}+\epsilon), N(p_k-\epsilon) \}$,
  $F_k^{-1}(\tau)$ is constant equal to $z_k$ for any $\tau \in [ p_{k-1}+\epsilon, p_k-\epsilon ]$ and we have:
  \begin{multline*}
    W_1(\nu_k, \nu_{\lim}) = \sum_{1\le k \le K: p_k \neq p_{k-1}} \int_{\tau=p_{k-1}}^{p_k} | F_k^{-1}(\tau) - z_k | d\tau \\
    = \sum_{1\le k \le K: p_k \neq p_{k-1}} \int_{\tau=p_{k-1}}^{p_{k-1}+\epsilon} \underbrace{ | F_k^{-1}(\tau) - z_k | }_{ \le \Delta_{z} } d\tau
    + \underbrace{ \int_{\tau=p_{k-1}+\epsilon}^{p_k-\epsilon} | F_k^{-1}(\tau) - z_k | d\tau }_{ =0 }
    + \int_{\tau=p_k-\epsilon}^{p_k} \underbrace{ | F_k^{-1}(\tau) - z_k | }_{ \le \Delta_{z} } d\tau \\
    \le 2 K \Delta_{z} \epsilon \ ,
  \end{multline*}
  which proves the result.

\end{proof}

By applying Lemma \ref{lem:convW1} to the sequence $(U_k^{(x,a)})_{k\ge 0}$ for each pair $(x,a)$,
we deduce the convergence of $(U_k)$ towards some limit distribution function $U_{\lim}$ in $\overline{W}_1$.
Finally, by continuity of $\Pi_{\mathcal{C}} \mathscr{T}$ for the metric $\overline{W}_1$, this limit must verify:
\begin{equation*}
  U_{\lim} = \frac{1}{2} U_{\lim} + \frac{1}{2}\Pi_{\mathcal{C}}\mathscr{T} U_{\lim}
  \quad \implies \quad U_{\lim} = \Pi_{\mathcal{C}}\mathscr{T} U_{\lim} \ ,
\end{equation*}
from which we deduce that $U_{\lim} = \eta_*$ by unicity.
The proof that $(L_j)$ converges to $\eta_*$ in $\overline{W}_1$ is analogous.

\subsection{Proof of Lemma \ref{lem:sandwich}}

Let us prove Lemma \ref{lem:sandwich} by induction.
The base case $j=0$ is true because for all $t\ge T_0=0$, $\eta_t^{(x,a)}$ is supported on $\{z_1,\dots,z_K \}$ and so:
\begin{equation*}
  L_0^{(x,a)}= \delta_{z_1} \le \eta_t^{(x,a)} \le \delta_{z_K} = U_0^{(x,a)} \ .
\end{equation*}
Then, let us assume that the result is true for some $j\ge 0$, i.e. there exists a random time $T_j$
such that $L_j \le \eta_t \le U_j$ for all $t \ge T_j$ almost surely.
Now, let us show that there exists a random time $T_{j+1}$ such that $\eta_t \le U_{j+1}$ for all $t\ge T_{j+1}$ almost surely
( the proof for $\eta_t \ge L_{j+1}$ is analogous).
For each state-action pair $(x,a)$, we define $H_{T_j}^{(x,a)} = U_j^{(x,a)}$ and $W_{T_j}^{(x,a)}$ the zero measure i.e.
$W_{T_j}^{(x,a)}(A) = 0$ for all Borel sets $A\subseteq \mathbb{R}$.
Then, for $t\ge T_j$,
\begin{multline}
  \label{eq:defHW}
  H_{t+1}^{(x,a)} = (1-\alpha_t(x,a)) H_{t}^{(x,a)} + \alpha_t(x,a)( \Pi_{\mathcal{C}} \mathscr{T} U_j )^{(x,a)} \\
   \text{ and }\quad\quad W_{t+1}^{(x,a)} = (1-\alpha_t(x,a)) W_{t}^{(x,a)} + \alpha_t(x,a)[ \Pi_{\mathcal{C}} ( \delta_{r(x,a,x') + \gamma \max_{a'} Q_t(x',a') } ) - ( \Pi_{\mathcal{C}} \mathscr{T} \eta_t )^{(x,a)} ] \ ,
\end{multline}
where $x'\sim P(\cdot|x,a)$, $Q_t(x',a') = \sum_{k=1}^K p_{t,k}(x',a') z_k$.
Moreover, if $(x,a)=(x_t,a_t)$, then $x'=x_{t+1}$ and $r(x_t,a_t,x_{t+1})=r_t$.
A consequence of Eq. (\ref{eq:defHW}) is that $W_{t}^{(x,a)}$ is a signed measure on $\mathbb{R}$ with total measure equal to zero:
$W_{t}^{(x,a)}(\mathbb{R}) = 0$ for all $t\ge T_j$.
Let us now prove by (another) induction that $\eta_t^{(x,a)} \le H_{t}^{(x,a)} + W_{t}^{(x,a)} $ for all $t\ge T_j, (x,a)\in\mathcal{X}\times \mathcal{A}$ almost surely.
For $t=T_j$, it is true by assumption that $\eta_{T_j} \le U_j$ almost surely.
Then, suppose that $\eta_t \le H_{t} + W_{t} $ almost surely for some $t\ge T_j$.
Recalling that $\alpha_t(x,a)=0$ for all $(x,a)\neq (x_t,a_t)$, it holds that:
\begin{multline}
  \eta_{t+1}^{(x,a)}
  = (1-\alpha_t(x,a))\eta_t^{(x,a)} + \alpha_t(x,a) \Pi_{\mathcal{C}} ( \delta_{r(x,a,x') + \gamma \max_{a'} Q_t(x',a') } ) \\
  = (1-\alpha_t(x,a))\eta_t^{(x,a)} + \alpha_t(x,a)( \Pi_{\mathcal{C}} \mathscr{T} \eta_t )^{(x,a)}
  + \alpha_t(x,a)[ \Pi_{\mathcal{C}} ( \delta_{r(x,a,x') + \gamma \max_{a'} Q_t(x',a') } ) - ( \Pi_{\mathcal{C}} \mathscr{T} \eta_t )^{(x,a)} ] \\
  \le (1-\alpha_t(x,a))(H_t+W_t)^{(x,a)} + \alpha_t(x,a)( \Pi_{\mathcal{C}} \mathscr{T} U_j )^{(x,a)}
  + \alpha_t(x,a)[ \Pi_{\mathcal{C}} ( \delta_{r(x,a,x') + \gamma \max_{a'} Q_t(x',a') } ) - ( \Pi_{\mathcal{C}} \mathscr{T} \eta_t )^{(x,a)} ] \\
  = H_{t+1}^{(x,a)} + W_{t+1}^{(x,a)} \ ,
\end{multline}
where the inequality comes from the assumptions $\eta_t \le H_{t} + W_{t} $ and $\eta_t \le U_j $ combined with the monotonicity of $\Pi_{\mathcal{C}} \mathscr{T}$.
Now, observe that $H_t$ can be explicitly expressed as follows:
\begin{equation*}
  H_t^{(x,a)} = \biggl( \prod_{t' = T_j}^{t-1} (1 - \alpha_{t'}(x,a) ) \biggr) U_j + \biggl( 1 - \prod_{t' = T_j}^{t-1} ( 1- \alpha_{t'}(x,a)) \biggr)( \Pi_{\mathcal{C}} \mathscr{T} U_j )^{(x,a)} \ .
\end{equation*}
Since by Assumption \ref{ass:RobbinsMonro}, it holds that $\sum_{t'=0}^{\infty} \alpha_{t'}(x,a) = \infty$ for all $(x,a)$ almost surely,
we deduce that there exists a random time $\widetilde{T}_{j+1} \ge T_j$ such that $ \prod_{t'=T_j}^{t-1} (1-\alpha_{t'}(x,a)) \le \frac{1}{4} $
for all $(x,a)$ and for all $t\ge \widetilde{T}_{j+1}$ almost surely.
Then, because Lemma \ref{lem:convLU}-(i) implies that $\Pi_{\mathcal{C}} \mathscr{T} U_j \le U_j$, we have
for all $t\ge \widetilde{T}_{j+1}$:
\begin{multline}
  \label{eq:jp1}
  \eta_t \le H_t + W_t
  \le \frac{1}{4} U_j + \frac{3}{4} \Pi_{\mathcal{C}} \mathscr{T} U_j + W_t
  = \frac{1}{2} U_j + \frac{1}{2} \Pi_{\mathcal{C}} \mathscr{T} U_j + W_t - \frac{1}{4} ( U_j - \Pi_{\mathcal{C}} \mathscr{T} U_j ) \\
  = U_{j+1} + W_t - \frac{1}{4} ( U_j - \Pi_{\mathcal{C}} \mathscr{T} U_j ) \ .
\end{multline}

We point out that the random noise term appearing in the definition of $W_{t+1}^{(x,a)}$ has zero mean:
for all $1\le k \le K$,
\begin{equation}
  \label{eq:unbiased_random_measure}
  \mathbb{E}_{x'\sim P(\cdot|x,a)}\biggl[ \Pi_{\mathcal{C}} ( \delta_{r(x,a,x') + \gamma \max_{a'} Q_t(x',a') } ) - ( \Pi_{\mathcal{C}} \mathscr{T} \eta_t )^{(x,a)} \biggr]( (-\infty , z_k] ) = 0 \ .
\end{equation}
Eq. (\ref{eq:unbiased_random_measure}) implies via a classic stochastic approximation argument under Assumption \ref{ass:RobbinsMonro}
that $W_t^{(x,a)}((-\infty, z_k]) \to 0$ almost surely, for all $(x,a)$ and $k\in\{1,\dots, K\}$.
Finally, we take $T_{j+1} \ge \widetilde{T}_{j+1}$ large enough so that
$|W_t^{(x,a)}((-\infty, z_k])| \le \frac{\Delta}{4}$ for all $t\ge T_{j+1}$ and all $(x,a)$, almost surely,
where $\Delta$ is given by:
\begin{equation*}
  \Delta = \inf\left\{ \left| \biggl[ U_j^{(x,a)} - (\Pi_{\mathcal{C}} \mathscr{T} U_j)^{(x,a)} \biggr]((-\infty, z_k]) \right| \neq 0 : x \in\mathcal{X}, a\in\mathcal{A}, 1\le k\le K \right\} \ ,
\end{equation*}
which concludes the proof by using Eq. (\ref{eq:jp1}).
Indeed, if $ U_j^{(x,a)}((-\infty, z_k]) = (\Pi_{\mathcal{C}} \mathscr{T} U_j)^{(x,a)} ((-\infty, z_k]) $,
then $ U_{j}^{(x,a)}((-\infty, z_k]) = U_{j+1}^{(x,a)}((-\infty, z_k]) $.

\end{document}